\documentclass[twoside]{article}

\usepackage[preprint]{aistats2027}

\usepackage[utf8]{inputenc}
\def\withnotes{1}
\def\withcolors{1}
\usepackage{tex_macros}
\usepackage{graphicx}
\usepackage{subcaption}
\usepackage{bm}
\usepackage{framed}
\usepackage{physics}
\usepackage{nicefrac}
\usepackage[round]{natbib}

\PassOptionsToPackage{numbers,compress}{natbib}

\begin{document}

%

%

\twocolumn[

\aistatstitle{Denoising Score Matching in Diffusion Models: The Effect of Rank Constraints on Generalization}

\aistatsauthor{ Anand Jerry George \And  Nicolas Macris }

\aistatsaddress{ École Polytechnique Fédérale de Lausanne (EPFL),\\
Lab for Statistical Mechanics of Inference in Large Systems (SMILS),\\ CH-1015 Lausanne,\\ Switzerland} 
]

\begin{abstract}
Architectural biases restrict the class of functions that a diffusion score network can represent, but their effect on generalization remains poorly understood. We study denoising score matching with score networks having a rank constrained non-linear term and a linear skip connection. Numerical experiments with trainable three-layer networks on data supported on an embedded sphere show that increasing rank lowers training error, but its effect on test error depends on sample size and data dimension. To explain this behavior, we analyze a random-feature score model under rank constraint with Gaussian data supported on a low-dimensional subspace. We characterize the optimal rank-constrained estimator through a truncated singular value decomposition and derive asymptotically precise limiting training and test errors when sample size $(n)$, feature width $(p)$, data dimension $(D)$, and rank $(r)$ scale proportionally with the ambient dimension $(d)$. The resulting spectral characterization distinguishes eigenmodes that improve the empirical fit from those that improve population accuracy. It also identifies the dimension of the orthogonal complement of the data subspace as a natural rank threshold. The theoretical learning curves qualitatively reproduce the observed rank-dependent overfitting, providing a tractable account of how bottlenecks or rank constraints can regularize score estimation and why their effectiveness depends on data dimension and sample size.
\end{abstract}

\section{Introduction}
Diffusion models have achieved remarkable success in high-dimensional generative modeling, yet their ability to memorize training data remains an important theoretical and practical concern \citep{somepalli_diffusion_2023, carlini_extracting_2023}. In the highly overparameterized regime, score networks can in principle fit increasingly fine details of the training dataset, and at sufficiently small noise levels the learned score may become strongly influenced by individual training samples \citep{gu_memorization_2023, yoon_diffusion_2023}. At the same time, practical diffusion architectures often generate novel samples rather than simply reproducing their training sets. Understanding which architectural biases prevent a score model from collapsing toward such memorizing solutions is therefore important for explaining the generalization behavior of diffusion models.

A prominent feature of practical score networks is their encoder--decoder structure \citep{ronneberger_u-net_2015}. Although these models may contain a very large number of parameters, information is repeatedly compressed into lower-dimensional representations before being decoded back to the ambient data space. This suggests that architectural bottlenecks may act as a restriction on the class of score functions that can be learned. In particular, they may prevent the model from independently fitting the score around every training example, while retaining the dominant directions needed to represent the underlying data distribution. This motivates the hypothesis that a form of effective low-rank structure in the learned score can suppress overfitting.

In this work, we isolate this mechanism through two complementary models of the score function. First, we numerically study the learning curves of a three-layer neural network with a bottleneck structure. Second, we consider a random-feature parameterization with an explicit rank constraint on the learned second-layer weight matrix. Both models comprise an additional linear skip connection term. The rank constraint in the random-feature model acts as an idealized encoder–decoder bottleneck, forcing the learned vector field to factor through a low-dimensional latent space. For this model, we derive analytical expressions for the limiting learning curves. By comparing these theoretical predictions with our empirical findings, we investigate how the rank of the learned score affects generalization in a controlled setting.

Our analysis shows how restricting the rank can qualitatively change the learning behavior of denoising score matching. High-rank models have sufficient flexibility to exploit sample-specific directions and can approach solutions strongly tied to the empirical dataset, whereas reducing the rank can suppresses these directions and force the estimator to retain only collective structure shared across samples. The resulting framework provides a simple theoretical model for studying how bottlenecks in practical score architectures can act as an regularizer against memorization/overfitting.

\subsection{Main Findings}
\begin{figure}
    \centering
    \includegraphics[width=0.9\linewidth]{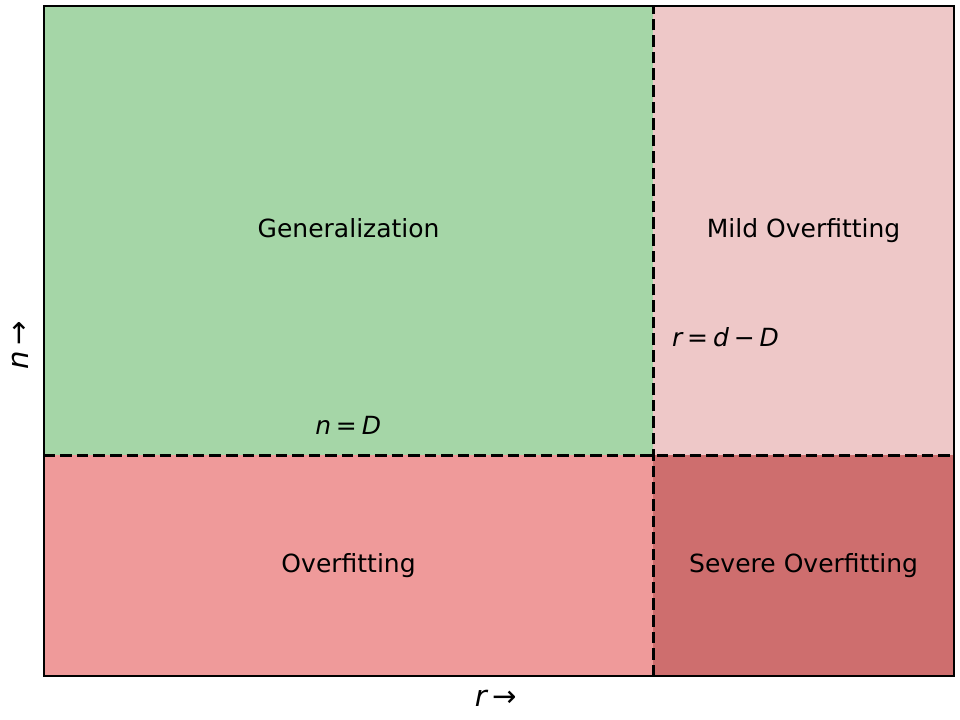}
    \caption{A qualitative illustration of our findings.}
    \label{fig:phase_diagram}
\end{figure}
Our findings reveal how rank of the nonlinear score component, sample size, and data dimension, all affect generalization. This is summarized schematically in Fig.~\ref{fig:phase_diagram}. The horizontal axis represents the rank $r$, while the vertical axis represents the number of training samples $n$. For data spanning a $D$-dimensional subspace of $\mathbb{R}^d$, two reference lines help organize the observed behavior: $n=D$, and $r=d-D$.
\begin{itemize}
    \item When $n>D$, the training samples span the data subspace almost surely in the models considered. Increasing rank below $d-D$ improves both training and test errors by allowing the model to learn more of the score component orthogonal to the data subspace. This behavior motivates us to interpret the upper-left part of Fig.~\ref{fig:phase_diagram} as generalization region.

    \item Increasing rank beyond $d-D$ introduces additional flexibility beyond that required by the data distribution. Training error continues to decrease, while test error can increase or plateau. The upper-right region represents this tendency toward overfitting. Additional training samples can mitigate this effect.

    \item When $n<D$, the training samples leave some directions of genuine data variation unobserved. The model can mistake these directions for normal space to the data subspace, so even a score with rank less than $d-D$ can generalize poorly. This explains the overfitting region below the horizontal boundary and why restricting rank alone is insufficient.

    \item Combining limited sample size with a large rank can further increase the separation between training and test errors, motivating the severe-overfitting region in the lower-right corner. 
\end{itemize}

These regions describe qualitative tendencies rather than sharp phase boundaries. In particular, for certain values of $n/d,D/d$, additional rank can be detrimental even below $d-D$, including when $n>D$. Our random-feature analysis characterizes this dependence through limiting training and test errors, while the numerical study illustrate how it appears in networks with learned features.

\section{Related Works}

\paragraph{Diffusion models and score estimation.}
Diffusion models learn a generative process through denoising and score estimation \citep{sohl-dickstein_deep_2015,song_generative_2019,ho_denoising_2020,song_score-based_2020}. Their sampling accuracy has been studied under assumptions on the error of the learned score \citep{chen_sampling_2022,benton_nearly_2023,chen_improved_2023,bortoli_convergence_2022}. These guarantees motivate a complementary question: how do finite data and restrictions on the score network determine that error? Analyses of score learning and distribution estimation address this question in several tractable settings \citep{cui_analysis_2023,shah_learning_2023,han_neural_2023,chen_score_2023,cui_precise_2025}. Our focus is the effect of an explicit architectural rank constraint on denoising score matching and its generalization error.

\paragraph{Memorization, data geometry, and architectural biases}
Empirical evidence of training-data replication \citep{somepalli_diffusion_2023,carlini_extracting_2023} has motivated theoretical studies of the empirical optimal score and the resulting sampling dynamics \citep{biroli_dynamical_2024,raya_spontaneous_2023,ambrogioni_search_2024,achilli_losing_2024}. Manifold and geometric perspectives further reveal how data structure influences memorization \citep{pidstrigach_score-based_2022,ross_geometric_2024,achilli_memorization_2025, wang_analytical_2025, george_analysis_2025,george_asymptotic_2026, merger_generalization_2026}.
Architectural constraints can favor shared structure over memorization. \citet{kamb_analytic_2024} show that locality and equivariance enable novel recombinations of training-image patches, while \citet{buchanan_edge_2025} identify a capacity-dependent crossover between memorizing and generalizing models' training losses. \citet{ye_provable_2025} distinguish empirical from population score estimation and their network-size requirements, supporting capacity control as a means to reduce memorization. \citet{kadkhodaie_generalization_2023} connect generalization to learned denoisers that perform shrinkage in geometry-adaptive harmonic bases, highlighting how architectural inductive biases shape the directions used for denoising. 

\paragraph{Random features and precise learning curves.}
Random-feature score models provide a tractable setting for studying finite-sample generalization \citep{li_generalization_2023,saha_generalization_2025}. \citet{george_denoising_2026} derive asymptotically precise test and train errors for Gaussian data, identifying the joint influence of sample size, feature width, and the number of noise realizations used in training. \citet{bonnaire_why_2025} identify a separation between the training times at which generalization and memorization emerge. More recently, \citet{latourelle-vigeant_generalization_2026} derive precise risk trajectories for kernel score under gradient flow, distinguishing generalizing, interpolating, and memorizing regimes. Closer to our work, \citet{urfin_first_2026} connect generalization and memorization to distinct spectral components of the neural tangent kernel Gram matrix and investigate spectral truncation as a regularizer. In contrast, we impose explicit output rank constraints on a nonlinear random-feature model for data supported on a low-dimensional subspace.


\section{Preliminaries}
We briefly discuss some basics of diffusion models, and the score models that we use. 
\subsection{Diffusion Models}

Given $n$ i.i.d. samples $x_1,x_2,\cdots,x_n$ from an unknown distribution $P_0$ on $\R^d$, diffusion models draw new samples from $P_0$ by time reversing a stochastic process. This stochastic process gradually transforms $P_0$ into a tractable reference distribution, such as the isotropic Gaussian. We use an Ornstein-Uhlenbeck (OU) process~\citep{gardiner_stochastic_2009} for this forward diffusion:
\begin{equation}\label{eqn:forward_ou}
    \dd X_t = - X_t \;  \dd t +\sqrt{2} \; \dd B_t\;,\quad X_0\sim P_0\;,
\end{equation}
where $B_t$ is a standard Brownian motion in $\R^d$. Conditional on $X_0$, the distribution of $X_t$ is $\cN{a_tX_0,h_tI_d}$, where $a_t=e^{-t}$ and $h_t=1-e^{-2t}$, so that $a_t^2+h_t=1$. As $t\to\infty$, $a_t\to 0$ and $h_t\to 1$, and the distribution of $X_t$ converges to the $d$-dimensional standard Gaussian. For $t>0$, its marginal density is
\begin{equation}\label{eqn:marginal_prob_forward}
    P_t(x) = {(2\pi h_t)^{-d/2}}\int_{\R^d}  \dd P_0(x_0) \; e^{-\frac{\norm{x-a_tx_0}^2}{2h_t}}\;.
\end{equation}
For a fixed $T>0$, the reverse process starts from $Y_T\sim P_T$ and evolves backward in time according to
\begin{equation}\label{eqn:backward_sde}
    -\dd Y_t = \left(Y_t+2\nabla\log P_{t}(Y_t)\right) \;  \dd t+\sqrt{2} \; \dd \tilde{B}_t\;,
\end{equation}
where $\tilde{B}_t$ is a different instance of standard Brownian motion. 
Classical results state that $Y_t$ and $X_t$ have the same marginal distribution at every $t$~\citep{anderson_reverse-time_1982, haussmann_time_1986}, yielding $Y_0\sim P_0$. In practice, the unknown terminal distribution $P_T$ is approximated by $\cN{0,I_d}$ for sufficiently large $T$. Implementing the reverse process also requires estimating the unknown \emph{score function} $\nabla\log P_t$.

Denoising score matching (DSM)~\citep{vincent_connection_2011} estimates this score using noisy observations $y_t=a_tx+\sqrt{h_t}z$, where $x\sim P_0$ and $z\sim\cN{0,I_d}$ are independent. The conditional score provides the tractable target $\nabla\log P_t(y_t\mid x)=-z/\sqrt{h_t}$ for $t>0$, suggesting the population objective
\begin{equation*}
    \cL_{\text{DSM}}(s_t)=
    \frac{1}{d}\shortexpect_{x_0,z}{\norm{\sqrt{h_t}s_t(y_t)+z}^2}.
\end{equation*}
Replacing the expectation over $P_0$ by the training samples gives the empirical loss studied in this work:
\begin{align}\label{eqn:empirical_dsm_loss}
    \cL(s_t) &= \frac{1}{dn}\sum_{i=1}^{n} \shortexpect_z{\norm{\sqrt{h_t}s_t(a_tx_i+\sqrt{h_t}z)+z}^2}.
\end{align}
In practice, the score function $s$ is chosen from a parametric function class and trained by minimizing this empirical objective, subject to the model constraints or regularization. We next introduce the rank-constrained score models used in our study.
\subsection{Score Model for Numerical Study: Rank Constrained 3-Layer NN}

For each diffusion time \(t\), we consider the score model
\begin{equation}\label{eqn:score_numerical}
    s_{t}(y)
    =
    -y
    +
    \rat{B_t}{r} \rat{A_t}{p}
    \act{
        \rat{W_t}{d} y
        +b_t
    },
\end{equation}
where
$W_t\in\mathbb{R}^{p\times d},
\;
B_t\in\mathbb{R}^{d\times r},
\;
A_t\in\mathbb{R}^{r\times p},
\;
b_t\in\mathbb{R}^{p}$ are learnable parameters, and $\varrho$ is a non-linear activation function.
Here, \(p\) is the hidden-layer width and \(r\) controls the rank of
the nonlinear component. The parameters are learned
independently at each \(t\).

The first term, \(-x\), acts as a linear skip connection.
The remaining term provides a nonlinear, learnable correction whose output
belongs to the column space of \(B_t\). Since
$
    \operatorname{rank}(B_t A_t)\leq r,
$
this correction is restricted to an output subspace of dimension at
most \(r\). Small values of \(r\) therefore impose a strong bottleneck, whereas larger values allow the score model
to represent a richer collection of directions.

This parameterization allows us to study whether rank constraining
helps in capturing the population-level geometry of
the data while preventing the model from fitting sample-specific
fluctuations. In particular, we investigate whether increasing \(r\)
reduces the training error while eventually increasing the
train--test gap, indicating a transition from population-level
learning to finite-sample overfitting.

\subsection{Theoretical Score Model: Random Features with a Skip Connection}\label{sec:RFNN}
 For our theoretical analysis, we consider a random-feature neural network \citep{rahimi_random_2007} with fixed first-layer weights, a trainable second layer, and a linear skip connection.
In particular, the score parameterization from $\R^d$ to $\R^d$ given by
 \begin{equation}\label{eqn:RFNN_score}
  s_t(y|W_t) = -y+ A_t\act{W_ty/\sqrt{d}},   
 \end{equation}
where $W_t\in\R^{p\times d}$ has entries drawn i.i.d. from $\cN{0,1}$, $\varrho$ is an activation function applied element-wise, and $A_t\in\R^{d\times p}$ contains the trainable weights.

We impose the constraint $\text{rank}(A_t)\le r$, which restricts the nonlinear correction to an output subspace of dimension at most $r$. This provides a tractable counterpart to the bottleneck architecture used in our numerical study. The feature width $p$ controls the size of the fixed nonlinear representation, while $r$ controls the rank of the learned component.

Random-feature models admit precise theoretical analyses while reproducing phenomena observed in more complex neural networks, including double descent in overparameterized regimes~\citep{mei_generalization_2022,hu_asymptotics_2024,bodin_model_2021,bodin_gradient_2022}.
\section{Numerical Study}
For our numerical experiments, the data distribution $P_0$ is uniform distribution on a sphere embedded in $\mathbb{R}^{d}$:
$x=\Theta\xi,\;
\Theta^\top \Theta=I_D,\;
\xi=\sqrt{D}\,\frac{g}{\|g\|},\;
g\sim\mathcal{N}(0,I_D).
$
Thus the data manifold has intrinsic dimension
$D-1$, and is embedded in a linear subspace of dimension $D$.

For each rank $r$, we train a separate score model given in \eqref{eqn:score_numerical}.
We sweep $r$ and
optimize the denoising score-matching objective \eqref{eqn:empirical_dsm_loss}.
Fig.~\ref{fig:sim_model_tanh_t0p01} reports train and test error estimates of this
same objective after $30$k Adam steps, averaged over three random seeds. More details on the simulation can be found in Appendix~\ref{app:simulation_conditions}.

\begin{figure*}[htbp]
    \centering
    \includegraphics[width=\linewidth]{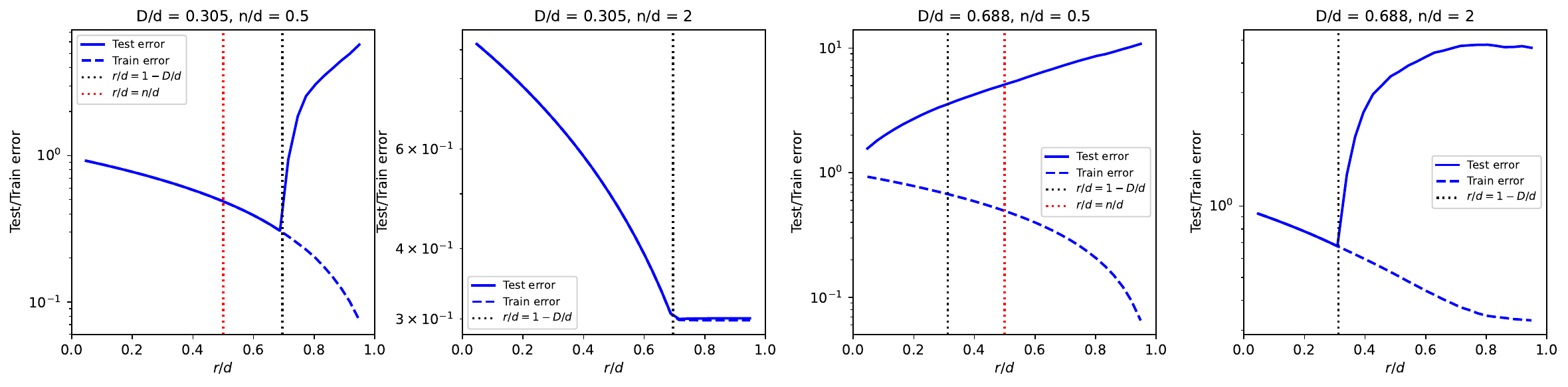}
    \caption{Simulation results for our numerical model, with $\tanh$ activation at $t=0.01$, $p/d=2$.}
    \label{fig:sim_model_tanh_t0p01}
\end{figure*}

First, we observe that the train error decreases monotonically with increasing $r$ across all settings. However, the behavior of test error depends on the samples size $n$, and data dimension $D$.
For $n>D$, the test error decreases as we increase $r$ till $d-D$. Beyond $d-D$, we observe an abrupt change in behavior: with additional rank, the test error increases or plateaus, indicating finite-sample overfitting. Increasing $n$ suppresses this effect strongly at smaller $D$,
but does not eliminate it at larger $D$. When $n<D$, the sudden change in test error at $r = d-D$ disappears. This is because the clean training samples cannot span the full data
subspace, making the unseen data subspace directions a source of poor generalization. 

The observed rank dependence is consistent with learning the score component
in the space orthogonal to the data span, whose dimension is $d-D$. Section~\ref{sec:discussion} discusses these aspects in detail.
\section{Theoretical Study}
We analyze the random-feature score model given in \eqref{eqn:RFNN_score} at a fixed diffusion time. Suppressing the time index, we write
\[
    s_A(y)=-y+A\phi(y),\qquad
    \phi(y)=\act{Wy/\sqrt{d}},
\]
where $W$ is held fixed and $A$ is learned subject to a rank constraint. We consider Gaussian data $x\sim\mathcal{N}(0,\cC)$ with $\cC=\operatorname{diag}(I_D,0)$, providing a tractable model of data supported on a $D$-dimensional subspace of the ambient space. 

Given independent training samples $x_1,\ldots,x_n$, let $y_i=ax_i+\sqrt{h}z$, where $z\sim\mathcal{N}(0,I_d)$ is independent of the data. The rank-constrained DSM estimator solves
\begin{equation}\label{eqn:rank_constrained_min_pblm}
   \min_{\text{rank}(A)\le r} \cL(A) = \sum_{i=1}^n\shortexpect_z{\norm{\sqrt{h}(A\phi(y_i)-y_i)+z}^2}.
\end{equation}

To characterize the solution, define matrices
\begin{align*}
    U&=\frac{1}{n}\sum_{i=1}^n\bE{z}{\phi(y_i)\phi(y_i)^T},\\
    V&=\frac{1}{n}\sum_{i=1}^n
    \bE{z}{\phi(y_i)(y_i-z/\sqrt{h})^T}.
\end{align*}
Here, $U$ and $V$ are the empirical feature second moment and feature–target cross moment, respectively. The following theorem gives an exact finite-dimensional solution to \eqref{eqn:rank_constrained_min_pblm} through a truncated singular value decomposition.

\begin{theorem}[Rank-constrained Random Features estimator]
\label{thm:rank_constrained_dsm}
Fix $t>0$, let $a=e^{-t}$ and $h=1-e^{-2t}$, and condition on the training samples and $W$. Assume that $U\succ0$. Let $q=\min(d,p)$ and $r\in\{0,\ldots,q\}$. Write $V^TU^{-1/2}=S\Sigma R^T,$ with singular values $s_1\geq\cdots\geq s_q\geq0$, and let $\Sigma_r$ retain the $r$ largest singular values. Then a solution to \eqref{eqn:rank_constrained_min_pblm} is
\begin{equation}\label{eq:main_rank_optimizer}
    \widehat A_r=S\Sigma_rR^TU^{-1/2}.
\end{equation}
\end{theorem}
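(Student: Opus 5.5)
The plan is to reduce \eqref{eqn:rank_constrained_min_pblm} to a weighted low-rank approximation problem and then to a plain Frobenius-norm low-rank approximation, where Eckart--Young--Mirsky applies. First, expand the objective. Write $\tilde y_i = y_i - z/\sqrt{h}$, so that $\sqrt{h}(A\phi(y_i)-y_i)+z = \sqrt{h}(A\phi(y_i)-\tilde y_i)$. Then
\[
\cL(A) = h\sum_{i=1}^n \shortexpect_z{\norm{A\phi(y_i)-\tilde y_i}^2}
= hn\left(\operatorname{tr}(AUA^T) - 2\operatorname{tr}(AV) + c\right),
\]
where $c=\frac1n\sum_i\shortexpect_z{\norm{\tilde y_i}^2}$ does not depend on $A$. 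Here $V$ is $p\times d$, so $\operatorname{tr}(AV)$ is well defined. Hence minimizing $\cL$ over $\operatorname{rank}(A)\le r$ is equivalent to minimizing $F(A)=\operatorname{tr}(AUA^T)-2\operatorname{tr}(AV)$.

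Next, complete the square using $U\succ 0$. Setting $B=AU^{1/2}$, we have $\operatorname{tr}(AUA^T)=\norm{B}_F^2$. Also $\operatorname{tr}(AV)=\operatorname{tr}(BU^{-1/2}V)=\langle B, V^TU^{-1/2}\rangle_F$, using the symmetry of $U^{-1/2}$. Therefore
\[
F(A)=\norm{B - V^TU^{-1/2}}_F^2 - \norm{V^TU^{-1/2}}_F^2 .
\]
Since $U^{1/2}$ is invertible, the map $A\mapsto AU^{1/2}$ is a bijection on $d\times p$ matrices that preserves rank. The problem therefore becomes: minimize $\norm{B-M}_F^2$ over $\operatorname{rank}(B)\le r$, with $M=V^TU^{-1/2}\in\R^{d\times p}$.

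By the Eckart--Young--Mirsky theorem, a minimizer is the truncated SVD $B_r=S\Sigma_rR^T$, where $\Sigma_r$ keeps the $r$ largest singular values among $s_1\ge\dots\ge s_q$, $q=\min(d,p)$. Mapping back through the bijection gives $\widehat A_r=B_rU^{-1/2}=S\Sigma_rR^TU^{-1/2}$, which is \eqref{eq:main_rank_optimizer}. Non-uniqueness when $s_r=s_{r+1}$ is harmless, since the claim is only that this is \emph{a} solution.

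The only delicate step is the expansion of the expectation. We must check that the $z$-expectations are finite (mild growth conditions on $\varrho$) so that $U$, $V$ and $c$ are well defined. We must also keep track of transposes so that the cross term is exactly $\langle AU^{1/2},V^TU^{-1/2}\rangle_F$ and not its transpose. After that, the argument is the standard whitening-plus-Eckart--Young reduction, and I expect no real obstacle.
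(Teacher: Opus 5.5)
Your proof is correct and is essentially the paper's argument. The paper expands $\cL$ around the unconstrained minimizer $A_{\mathrm{LS}}=V^TU^{-1}$, where the mixed term vanishes by the normal equation $A_{\mathrm{LS}}U=V^T$; this is exactly your completion of the square, since $(A-A_{\mathrm{LS}})U^{1/2}=AU^{1/2}-V^TU^{-1/2}$, and the paper then applies Eckart--Young through the same rank-preserving map $A\mapsto AU^{1/2}$.
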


\begin{proof}[Proof Sketch]
The normal equation $A_{\mathrm{LS}}U=V^T$ gives
\begin{align*}
    \cL_{\mathrm{train}}(A)
    &=\cL_{\mathrm{train}}(A_{\mathrm{LS}})+\frac{h}{d}\|(A-A_{\mathrm{LS}})U^{1/2}\|_F^2.
\end{align*}
Since multiplication by $U^{1/2}$ preserves rank, the Eckart--Young theorem yields~\eqref{eq:main_rank_optimizer}.
\end{proof}
Equation~\ref{eq:main_rank_optimizer} shows that rank truncation retains the leading singular modes after accounting for the empirical feature covariance. 

We next state the limiting characterization of test and train errors for the rank constrained solution. First, we define test and train loss as follows:
\begin{align*}
\cL_{\text{test}}(A) &= \frac{1}{d} \shortexpect_{x_0,z}{\norm{\sqrt{h_t}s_A(a_tx_0+\sqrt{h_t}z)+z}^2},\\
    \cL_{\text{train}}(A) &= \frac{1}{dn}\sum_{i=1}^{n} \shortexpect_z{\norm{\sqrt{h_t}s_A(a_tx_i+\sqrt{h_t}z)+z}^2}.
\end{align*}
\begin{assumption}[Activation function]\label{assmptn:activation}
    We assume that for $g\sim\mathcal N(0,1)$, and for all $\kappa>0$, $\bE{g}{\varrho(\kappa g)}=0$. Set $\sigma^2=a^2\psi_D+h$, and $\mu=\frac{1}{\sigma}\bE{g}{g\varrho(\sigma g)}$. Constants $s, v,$ and $v_0$ are as defined in eqns \eqref{eqn:s2}, \eqref{eqn:v2}, \eqref{eqn:v02} in Appendix~\ref{app:rank_constrained_learning}. We assume that $\mu$ and $s$ are positive.
\end{assumption}
\begin{theorem}[Limiting characterization of test and train errors]
\label{thm:limiting_test_error}
Fix $t>0$ and let $d\to\infty$ with $\psi_\diamond = \frac{\diamond}{d}$, for $\diamond\in\{n,p,D,r\}$ held fixed. 
Let the training samples be independent draws from $\mathcal N(0,\operatorname{diag}(I_D,0))$, independent of the i.i.d. standard Gaussian entries of $W$. Let $\varrho$ satisfy Assumption~\ref{assmptn:activation}, and suppose that the Gaussian equivalence principle given in Assumption~\ref{asmptn:GEP} of Appendix~\ref{app:rank_constrained_learning}. is valid. 

Let $G_{00}, G_{17}, G_{22}, G_{44}$ be obtained by solving the system of fixed point equations~\eqref{eqn:fixed_point_system} corresponding to the limiting block resolvent of~\eqref{eqn:linear_pencil}. For $\Im z>0$, define
\begin{align}
    e_1(z)&=-\frac{\psi_p}{s^2z}G_{00}(z),\quad
    e_2(z)=-\frac{\psi_D}{a^2\mu}G_{17}(z),\notag\\
    e_3(z)&=\frac{1-\psi_D}{a^4\mu^2}\bigl(1-G_{22}(z)\bigr).
    \label{eq:main_limiting_resolvents}
\end{align}
The limiting eigenvalue distribution $\nu$ of $M=V^TU^{-1}V$ has Stieltjes transform
\begin{equation}\label{eq:main_limiting_stieltjes}
    \textbf{s}_{\nu}(z)=\frac{\psi_D(1-G_{44}(z))
    +(1-\psi_D)(1-G_{22}(z))-1}{z}.
\end{equation}
Suppose there is a $\Gamma_{\psi_r}>0$ satisfying
\begin{equation}\label{eq:main_rank_cutoff}
    \nu((\Gamma_{\psi_r},\infty))=\psi_r.
\end{equation}
Define
\begin{equation}\label{eq:main_limiting_risk_transform}
    \textbf{m}(z)=\mu^2e_2(z)
    +\left(h\mu^2-\frac{2a^4\mu^2}{z}\right)e_3(z)
    +v^2e_1(z).
\end{equation}
Then, with $\omega = 1+h(\sigma^2-2)$, the estimator~\eqref{eq:main_rank_optimizer} satisfies
\begin{align*}
    \cL_{\mathrm{test}}(\widehat A_r)
    &\xrightarrow{\mathbb P}\omega
    +\frac{h}{\pi}\lim_{K\to\infty}\lim_{\eta\downarrow0}
    \int_{\Gamma_{\psi_r}}^{K}
    x\,\Im \textbf{m}(x+i\eta)\,\dd x,\\
    \cL_{\mathrm{train}}(\widehat A_r)
    &\xrightarrow{\mathbb P}\omega-\frac{h}{\pi}\lim_{K\to\infty}\lim_{\eta\downarrow0}
    \int_{\Gamma_{\psi_r}}^{K}
    x\,\Im \textbf{s}_{\nu}(x+i\eta)\,\dd x.
\end{align*}
\end{theorem}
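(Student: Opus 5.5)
The plan is to reduce both errors to spectral functionals of $M=V^TU^{-1}V$ using Theorem~\ref{thm:rank_constrained_dsm}, and then evaluate these functionals with the Gaussian equivalence principle and a linear-pencil resolvent computation.

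\emph{Training error.} Write $V^TU^{-1/2}=S\Sigma R^T$, so that $\widehat A_r U^{1/2}=S\Sigma_r R^T$ and $A_{\mathrm{LS}}U^{1/2}=S\Sigma R^T$. By the identity in the proof sketch of Theorem~\ref{thm:rank_constrained_dsm},
\begin{equation*}
\cL_{\mathrm{train}}(\widehat A_r)=\cL_{\mathrm{train}}(0)-\frac{h}{d}\sum_{k\le r}s_k^2 .
\end{equation*}
The $s_k^2$ are exactly the eigenvalues of $M$. Expanding $\cL_{\mathrm{train}}(0)=\frac1{dn}\sum_i\mathbb E_z\|{-\sqrt h}y_i+z\|^2$ gives $1+h(a^2\|x_i\|^2/d+h)-2h$. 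By the law of large numbers this tends to $1+h(\sigma^2-2)=\omega$. The sum over the top $r$ eigenvalues, divided by $d$, is $\int_{\Gamma_{\psi_r}}^\infty x\,\dd\nu(x)$, where $\Gamma_{\psi_r}$ is the $\psi_r$-upper quantile. Stieltjes inversion then gives the stated formula. Two points need care here: convergence of the truncated partial sum, which relies on $\nu$ having no atom at $\Gamma_{\psi_r}$ and on no outlier eigenvalues (bounded operator norm), and the handling of the limit $K\to\infty$.

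\emph{Test error.} Here $\cL_{\mathrm{test}}(A)=\omega'-\frac{2h}{d}\operatorname{tr}(A\bar V)+\frac hd\operatorname{tr}(A\bar U A^T)$, with $\bar U,\bar V$ the population analogues of $U,V$ (conditional on $W$), and $\omega'\to\omega$. Substituting $\widehat A_r=\sum_{k\le r}s_k\, \mathbf s_k\mathbf r_k^TU^{-1/2}$ writes the correction as a sum over the retained modes of quadratic forms in the eigenvectors of $M$. Equivalently, it is $\int_{\Gamma_{\psi_r}}^\infty x\,\dd\mathbf m$-type measure, where the generating function is a trace such as
\begin{equation*}
\tfrac1d\operatorname{tr}\big[(M-z)^{-1}\,\Xi\big]
\end{equation*}
with $\Xi$ built from $U^{-1/2}\bar U U^{-1/2}$ and $V^TU^{-1}\bar V$. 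Normalising by $x$ (the $2a^4\mu^2/z$ term) accounts for the $s_k$ versus $s_k^2$ weighting. So I would show that this weighted spectral measure has Stieltjes transform $\mathbf m(z)$, and the test formula then follows by the same inversion.

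\emph{Random-matrix step.} Under Assumption~\ref{asmptn:GEP} I would replace $\phi(y)$ by its Gaussian equivalent $\mu Wy/\sqrt d+v\,\xi$ (with $v_0$ entering $\bar U$). Because $\mathcal C=\operatorname{diag}(I_D,0)$, the noise and data blocks separate into the $D$ data directions and the $d-D$ orthogonal directions; this separation is what produces the $\psi_D$ and $1-\psi_D$ weights and the terms $G_{44}$ and $G_{22}$. I would then write the resolvent of $M$ and the traces above as blocks of the inverse of the linear pencil \eqref{eqn:linear_pencil} in the Gaussian matrices $W,X,Z,\Xi$. Operator-valued free probability (or the standard block-resolvent deterministic equivalent) gives the fixed-point system \eqref{eqn:fixed_point_system}, and identifying the blocks yields $e_1,e_2,e_3$ and $\mathbf s_\nu$.

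\emph{Main obstacle.} The hardest step is this last one: designing a pencil whose specific blocks equal the required weighted traces, including $U^{-1}$ nested inside $M$, and proving concentration and uniform convergence of these traces near the real axis so that the limits in $\eta$ and $K$ can be exchanged.
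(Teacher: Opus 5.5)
Your training-error argument is correct and amounts to the paper's computation. There, the quadratic term plus twice the cross term collapse to $-\frac hd\sum_{k\le r}s_k^2$; your Pythagorean form with $\mathcal L_{\mathrm{train}}(0)\to\omega$ is equivalent.

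\textbf{First gap: how Gaussian equivalence enters.} Assumption~\ref{asmptn:GEP} does not let you replace $\phi(y)$ by $\mu Wy/\sqrt d+v\xi$. It concerns the \emph{noise-averaged} feature matrix $H=\varrho_0(WX/\sqrt d)$, and applies only inside the four traces $E_0,\dots,E_3$ built from $U_F,V_F$. To reach that form you first need the leading-order Mehler/Hermite computation of the moments:
$V=aHX^T/n-a^2\mu W/\sqrt d$, $U=HH^T/n+h\mu^2WW^T/d+s^2I_p$, and $\bar U=\mu^2W\tilde{\mathcal C}W^T/d+v^2I_p$ with $\tilde{\mathcal C}=a^2\mathcal C+hI$. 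You also need the argument that entrywise $O(d^{-1+\epsilon})$ corrections do not affect normalized traces.

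This computation splits the residual variance $v^2=v_0^2+s^2$ into two parts. One part is frozen per clean sample and survives in $U$ and $V$ as $v_0\Omega$, giving the Marchenko--Pastur-type block $v_0^2\Omega\Omega^T/n$ and cross terms with $WX$. The other part decorrelates across noise draws and becomes $s^2I_p$.

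A single-$v$ surrogate cannot reproduce this split. If $\xi$ is fresh per noise draw, $\mathbb E_z$ collapses the residual to $v^2I_p$ and removes the $av_0\Omega X^T/n$ term from $V$. If $\xi$ is tied to $x_i$, the $s^2I_p$ term is lost. Either way you obtain a different pencil and fixed-point system; for instance, $\mathcal D_0$ contains $v_0$ and $s$ separately. Relatedly, $v_0$ enters the empirical $U,V$, not $\bar U$. Also, no noise matrix $Z$ appears in the pencil, since the noise is integrated out exactly.

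\textbf{Second gap: the reduction you flag as the main obstacle.} The paper resolves it with two resolvent identities and one moment identity. With $Q(z)=(VV^T-zU)^{-1}$ and $\tilde M=U^{-1/2}VV^TU^{-1/2}$:
\begin{itemize}
\item $U^{-1/2}(\tilde M-z)^{-1}U^{-1/2}=Q(z)$, so the quadratic term is a Stone integral of $\frac1d\operatorname{tr}(Q\bar U)=\mu^2E_2+h\mu^2E_3+v^2E_1$.
\item $(M-z)^{-1}=z^{-1}(V^TQ(z)V-I)$.
\item For the cross term, $\bar V=V(I-\mathcal C)$ to leading order, because $X^T(I-\mathcal C)=0$ and $I-\tilde{\mathcal C}=a^2(I-\mathcal C)$. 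Hence $\widehat A_r\bar V=S\Sigma_r\Sigma_r^TS^T(I-\mathcal C)$. The resolvent identity then turns its trace into $a^4\mu^2E_3/z$, plus a pole at $0<\Gamma_{\psi_r}$ that drops out.
\end{itemize}
So the $-2a^4\mu^2/z$ in $\mathbf m$ comes from $(M-z)^{-1}=z^{-1}(V^TQV-I)$, not from converting $s_k$ into $s_k^2$ weights; after the identity, both terms carry $s_k^2$ weights.

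Only after this reduction is the sole inverse that of $VV^T-zU$, which is quadratic in $(W_1,W_2,X_1,\Omega)$ and is what \eqref{eqn:linear_pencil} linearizes. Traces of the form $\frac1d\operatorname{tr}[(M-z)^{-1}\Xi]$ with nested $U^{-1}$ are not among the $E_j$ for which Gaussian equivalence is assumed.

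Your concerns about atoms at $\Gamma_{\psi_r}$, outlier eigenvalues, and exchanging the $\eta$ and $K$ limits are legitimate, but the paper does not address them either.
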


\begin{proof}[Proof sketch]
We outline the proof here. Complete steps can be found in Appendix~\ref{app:rank_constrained_learning}. Expanding the DSM losses gives quadratic, cross, and  $\widehat A_r$ independent terms; see~\eqref{eq:app_test_expansion} and~\eqref{eq:app_train_expansion}. The test error involves population moments, while the training error involves the empirical moments that define the optimizer.

The first step is to obtain tractable representations of these moment matrices. Using Hermite expansions and truncation up to leading order give~\eqref{eq:app_moment_representations}. The \textit{Gaussian equivalence principle} \citep{gerace_generalisation_2020, goldt_gaussian_2022, hu_universality_2023,montanari_universality_2022} then replaces the nonlinear noise-averaged feature matrix $H=\varrho_0(WX/\sqrt d)$, for the limiting trace quantities under consideration, by
\[
    H^G=a\mu WX/\sqrt{d}+v_0\Omega,
\]
where $\Omega$ has independent standard Gaussian entries and is independent of $W$ and $X$. The linear term preserves the feature--data correlations, while $v_0\Omega$ represents the residual feature variance. This is an asymptotic equivalence for the relevant observables, rather than an entrywise identity. It reduces the moment matrices to expressions involving Gaussian random matrices and the coefficients $\mu,s,v,v_0$.

Next, Theorem~\ref{thm:rank_constrained_dsm} implements the rank constraint by retaining the leading singular modes of $V^TU^{-1/2}$. Their squared singular values are the nonzero eigenvalues shared by $M=V^TU^{-1}V$ and $\tilde M=U^{-1/2}VV^TU^{-1/2}$. Stone's formula converts this truncation into a resolvent integral. For example, with a cutoff $\Gamma$ separating the retained eigenvalues and an upper endpoint $K$ above the spectrum,
\[
    S\Sigma_r\Sigma_r^TS^T
    =\frac{1}{\pi}\lim_{\eta\downarrow0}
    \int_{\Gamma}^{K}x\,\Im(M-(x+i\eta)I)^{-1}\,\dd x.
\]
The analogous formula for $\tilde M$ handles the quadratic term. In the proportional limit, the rank fraction selects the upper-tail cutoff $\Gamma_{\psi_r}$ through~\eqref{eq:main_rank_cutoff}.

These representations reduce the losses to normalized traces of rational expressions in random matrices. Set $Q(z)=(VV^T-zU)^{-1}$. The identity
\[
    (M-zI)^{-1}
    =z^{-1}\bigl(V^TQ(z)V-I\bigr)
\]
expresses the Stieltjes transform of $M$ through $E_0(z)=d^{-1}\tr{V^TQ(z)V}$, while the population weights produce the traces $E_1,E_2,E_3$ in~\eqref{eq:app_weighted_traces}. To compute their limits, the \textit{linear-pencil} technique embeds these rational expressions into selected inverse blocks of the larger matrix $L(z)$ in~\eqref{eqn:linear_pencil}. Its random blocks depend linearly on the underlying Gaussian matrices, and block matrix inverse recovers the required resolvents. A computation rooted in operator-valued free probability theory \citep{far_spectra_2006, helton_applications_2018, adlam_neural_2020, bodin_random_2024,subramonian_auto-fpt_2025} yields the fixed-point system~\eqref{eqn:fixed_point_system} for the limiting normalized block traces. Extracting the appropriate blocks gives~\eqref{eq:main_limiting_resolvents} and~\eqref{eq:main_limiting_stieltjes}.

Finally, summing up the quadratic ($\mu^2e_2+h\mu^2e_3+v^2e_1$) and cross ($-2a^4\mu^2e_3/z$) contributions to test error gives~\eqref{eq:main_limiting_risk_transform}. For training error, the empirical normal equation combines the quadratic and cross terms into a single term. Applying spectral integration with the rank cutoff and adding the constant term $1+h(\sigma^2-2)$ gives the stated learning curves. The spectral integrals are understood with a finite upper endpoint beyond the spectral support, before taking $\eta\downarrow0$.
\end{proof}

The characterization reduces the computation into solving the fixed-point equations \eqref{eqn:fixed_point_system}, finding the spectral cutoff~\eqref{eq:main_rank_cutoff}, and evaluating two integrals. Rank enters through the cutoff $\Gamma_{\psi_r}$, while $\psi_n$, $\psi_p$, and $\psi_D$ shape the resolvent functions $\textbf{m}$ and $\textbf{s}_\nu$.
\begin{figure*}[htbp]
    \centering
    \includegraphics[width=\linewidth]{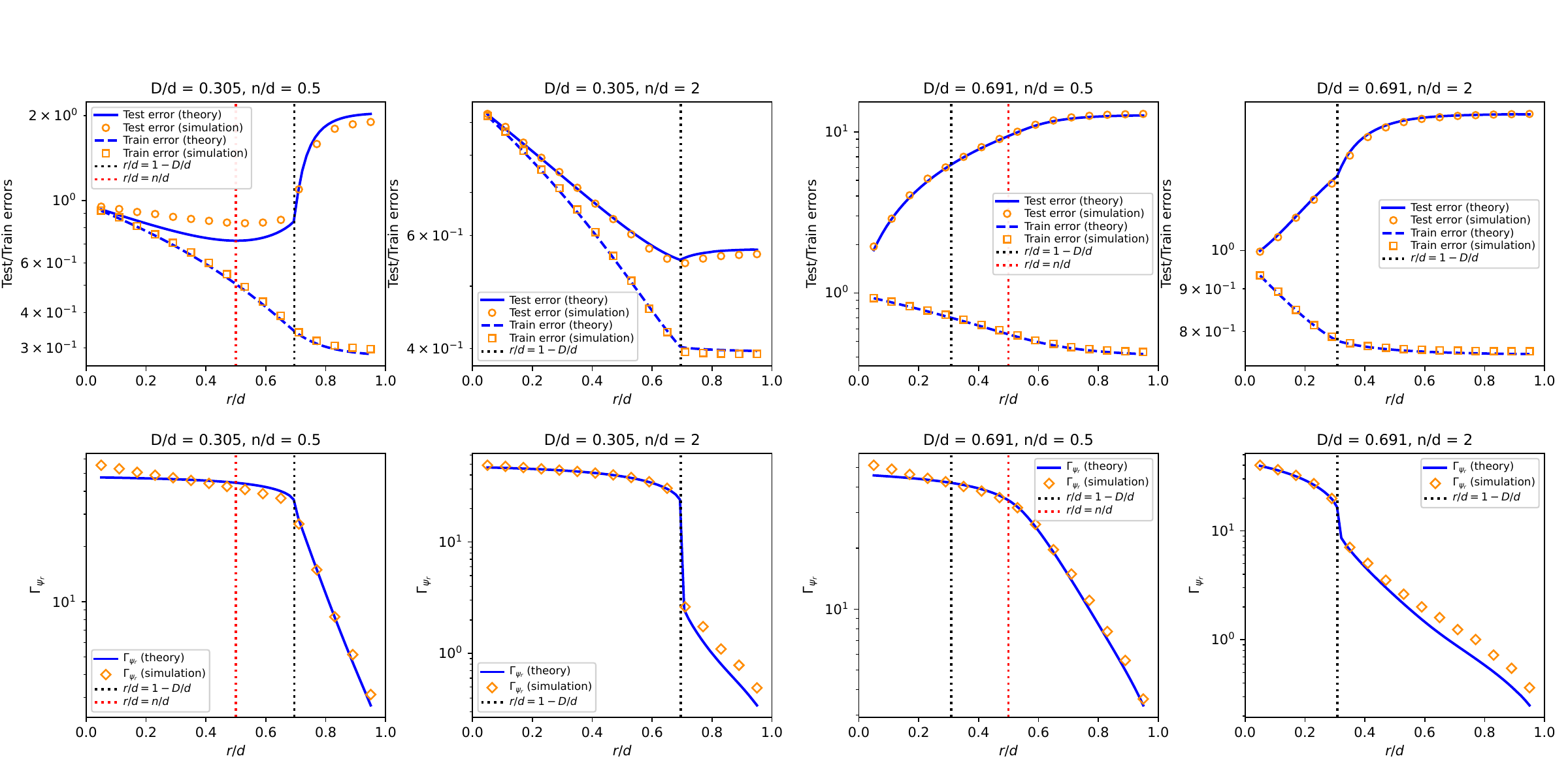}
    \caption{Analytical and finite dimensional $(d=200)$ simulation results for our theoretical model with $\tanh$ activation at $t=0.01$, $\psi_p = 2.0$. Top line: test and train errors. Bottom line: Spectral selection thresholds $\Gamma_{\psi_r}$.}
    \label{fig:theory_model_tanh_t0p01}
\end{figure*}
Fig.~\ref{fig:theory_model_tanh_t0p01} presents the theoretical learning curves and finite-dimensional simulations, which qualitatively reproduce the numerical study's dependence on rank, sample size, and data dimension. Training error decreases with rank, whereas test error exhibits distinct behaviors. For smaller $\psi_D$ and $\psi_n$, test error initially decreases before rising sharply at $\psi_r=1-\psi_D$, while increasing $\psi_n$ substantially suppresses this deterioration. At larger $\psi_D$ and smaller $\psi_n$, test error increases throughout the rank range, showing that additional rank can be detrimental even below $\psi_r=1-\psi_D$. For large $\psi_n$ however, the variation in test error is small. Section~\ref{sec:discussion} discusses the mechanisms underlying these observations. Fig.~\ref{fig:spectrum_theory_model_tanh_t0p01} shows the limiting spectrum of $M$, whose leading modes are retained by the rank constraint. 
\begin{figure*}[t]
    \centering
    \includegraphics[width=\linewidth]{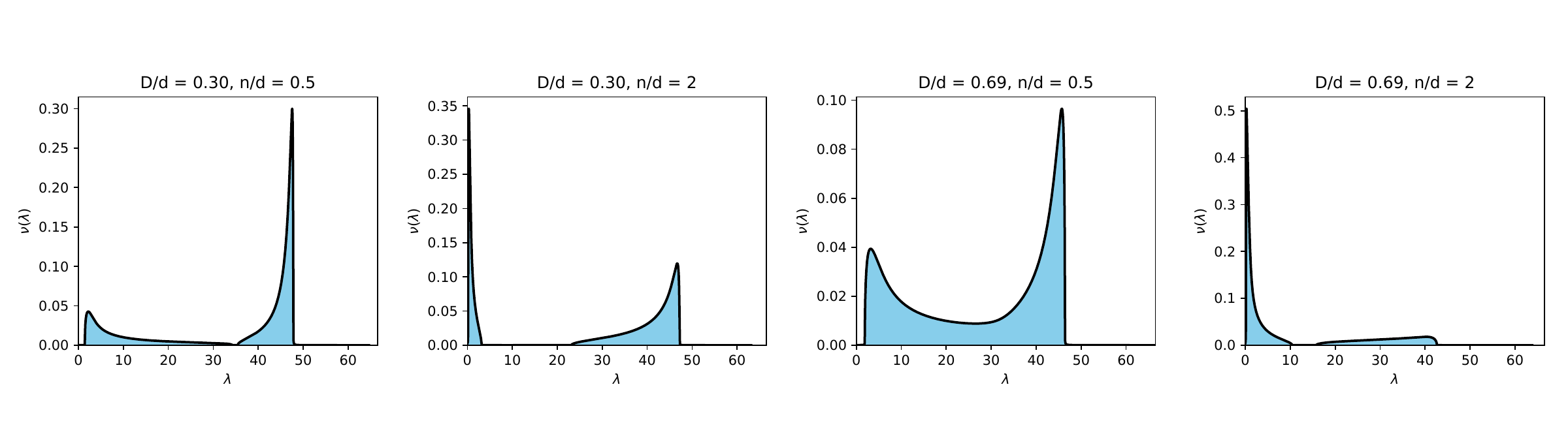}
    \caption{Spectrum of $M$ with $\tanh$ activation at $t=0.01$, $\psi_p = 2.0$.}
    \label{fig:spectrum_theory_model_tanh_t0p01}
\end{figure*}


\section{Discussion}\label{sec:discussion}

Figs.~\ref{fig:sim_model_tanh_t0p01} and~\ref{fig:theory_model_tanh_t0p01} (and Figs.~\ref{fig:sim_model_tanh_t0p1} and \ref{fig:theory_model_tanh_t0p1} in Appendix~\ref{sec:add_res_time}) show that increasing the rank of the nonlinear score component improves the fit to the training dataset without always improving generalization. We organize these observations around the four regimes in Fig.~\ref{fig:phase_diagram}, determined by the sample threshold $n=D$ and the reference rank $r=d-D$. These thresholds help explain the competing effects of model expressivity and estimation errors, but do not define sharp boundaries between decreasing and increasing test error.

\paragraph{Sufficient samples and restricted rank: $n>D$, $r<d-D$.}
When $n>D$, the clean training samples span the data subspace almost surely for the distributions considered here. In this regime, increasing rank can improve generalization by allowing the nonlinear score component to represent more directions orthogonal to that subspace. The underlying geometry is particularly transparent in the Gaussian case: the noisy covariance is $\tilde{\cC}=\text{diag}(I_D,hI_{d-D})$, and the ideal score is $\nabla\log P_t(y)=-y-\frac{a^2}{h}\text{diag}(0,I_{d-D})y$.
The skip connection already gives the correct score along the data subspace, while the remaining component acts in its normal space and has rank $d-D$. A rank below $d-D$ cannot represent this component fully, and the factor $a^2/h$ makes the missing normal directions especially consequential at small diffusion time. This mechanism explains the initial decrease of both training and test errors observed at smaller $D/d$. The spectra in Fig.~\ref{fig:spectrum_theory_model_tanh_t0p01} offer a complementary perspective: at $D/d=0.30$, an upper band is separated from a lower component, with clearer separation for larger $n/d$. In the population setting with an unconstrained linear score, the analogue of matrix $M$ is given by $(I-\tilde{\cC}^{-1})^2\tilde{\cC}$. It has $d-D$ nonzero eigenvalues, all equal to $a^4/h$. This reference spectrum suggests that the upper band in Fig.~\ref{fig:spectrum_theory_model_tanh_t0p01} primarily reflects normal-space directions. However, it should be noted that the empirical setting and the random-feature representation alter both the spectrum and the retained directions. Consequently, even when $n>D$, additional rank below $d-D$ need not always improve test error.

\paragraph{Sufficient samples and excess rank: $n>D$, $r>d-D$.}
Beyond $d-D$, additional rank is no longer needed to represent the normal-space component of the Gaussian population score. It nevertheless provides further flexibility to fit the training samples. Theorem~\ref{thm:rank_constrained_dsm} makes this distinction precise: the rank constraint retains the largest singular modes of $V^TU^{-1/2}$, or equivalently the largest eigenvalues of $M$, and admitting the next mode decreases the training loss by $hs_{r+1}^2/d$. The effect on test error also depends on the direction of that mode and the population feature moments. In Theorem~\ref{thm:limiting_test_error}, this distinction appears through function $\textbf{m}(z)$ which is different from the pure spectral object $\textbf{s}_\nu$. Thus, a large empirical eigenvalue establishes a training benefit without ensuring a corresponding test benefit. Additional modes can capture sample-specific structure, leading to a plateau or an increase in test error despite continued improvement in training error. This interpretation is consistent with the approximate plateau observed for the learned network at smaller $D/d$ and $n/d=2$, and with the more pronounced train--test separation at larger $D/d$. Increasing sample size reduces estimation uncertainty, but does not eliminate the effects of excessive rank at the sample sizes considered. 

\paragraph{Insufficient samples and restricted rank: $n<D$, $r<d-D$.}
Restricting rank is less effective when the training samples do not span the true data subspace. When $n<D$, at least $D-n$ directions of genuine data variation remain unobserved. A model can confuse these directions with the normal space and erroneously learn a strong denoising response along these directions. 
Due to its larger magnitude, the learned score component along the unseen directions are retained even under rank constraint.
 This mechanism is consistent with the increasing test error at larger $D/d$ and $n/d=0.5$, even within the range $r<d-D$. At $D/d=0.69$ and $n/d=0.5$, Fig.~\ref{fig:spectrum_theory_model_tanh_t0p01} shows appreciable density between the lower and upper parts of the spectrum, consistent with a less distinct separation of spectral components. The spectrum alone does not identify the alignment of individual eigenmodes, but it illustrates why selecting eigenmodes that benefit the empirical training need not isolate the directions required for generalization.

\paragraph{Insufficient samples and excess rank: $n<D$, $r>d-D$.}
Combining insufficient samples with a large rank allows both sources of overfitting to act together. The training data leave part of the true data subspace unresolved, while the additional rank permits the model to fit more directions beyond those needed for the population normal-space component. Training error continues to decrease, but this added flexibility can strengthen responses along unobserved data directions and capture further sample-specific structure, widening the train--test gap. This combination motivates the severe-overfitting region in Fig.~\ref{fig:phase_diagram} and is consistent with the substantial deterioration at higher ranks for larger $D/d$ and $n/d=0.5$. 

 Fig.~\ref{fig:theory_model_vs_psin_tanh_t0p01}, which plots training and test errors against $\psi_n$ for two values of $\psi_r$, further illustrates how sample size changes the consequences of a given rank constraint. Once $n>D$, the ambiguity caused by an unobserved part of the data span essentially disappears.

\begin{figure}[htbp]
    \centering
    \includegraphics[width=\linewidth]{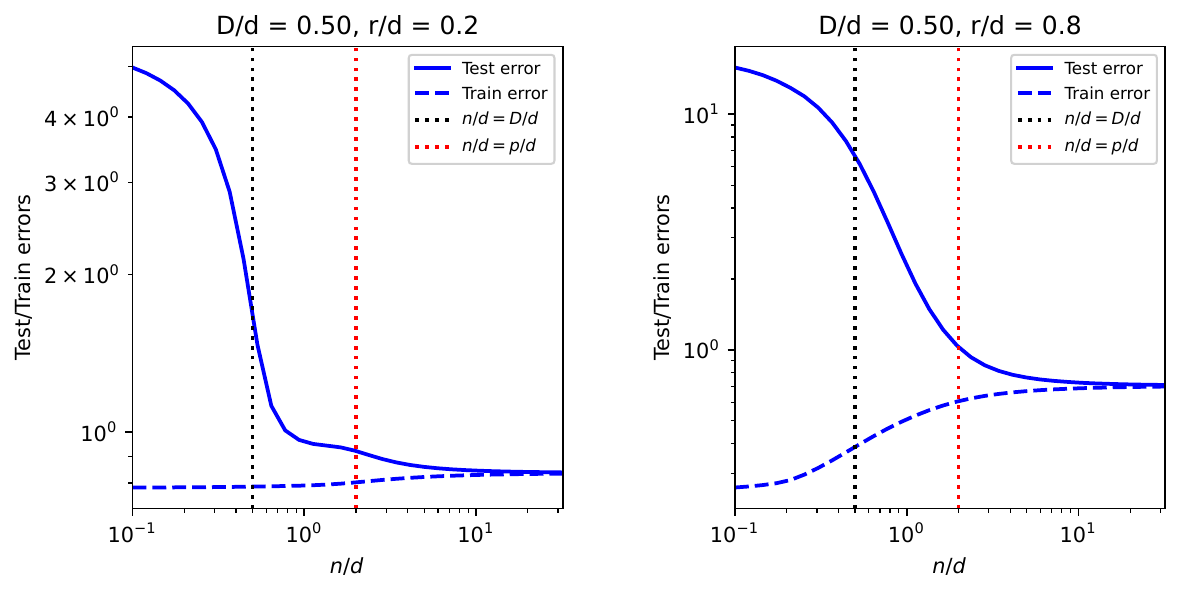}
    \caption{Theoretical results with $\tanh$ activation at $t=0.01$, $\psi_p = 2.0$.}
    \label{fig:theory_model_vs_psin_tanh_t0p01}
\end{figure}

Together, the numerical and theoretical results show that a useful rank constraint depends on both the data dimension and the sample size available to estimate the data geometry. The random-feature model isolates a mechanism for rank-dependent overfitting: empirical spectral selection rewards training fit, whereas generalization also depends on the population alignment of the selected directions. 

\section{Conclusion, Limitations, and Future Work}
We studied how output-rank constraints affect generalization in denoising score matching. Experiments with trainable networks and a proportional-limit random-feature analysis show that additional rank can improve training fit while worsening test error. Conversely, a rank-constraint can indeed suppress overfitting. The spectral characterization of $M$ explains this tradeoff through the population contributions of the retained modes, and highlights the joint role of data geometry and sample size. These findings establish rank constraints as a useful, but data-dependent, form of regularization for score estimation.

Our study isolates the effect of rank, which requires a tractable framework for rank control. The theoretical analysis considers fixed random features and Gaussian data supported on a linear subspace, while the numerical experiments use trainable networks with data on an embedded sphere. Their qualitative agreement supports the proposed mechanism, although the theory is not intended to predict every learning curve of a network with learned features. Moreover, the errors are evaluated at a fixed diffusion time $t$. Connecting these results to generated-sample quality and training-data replication requires studying the complete reverse process. Explicit output-rank constraints thus provide a tractable model of an architectural bottleneck, whose relationship to practical encoder--decoder networks remains to be explored.

Future work includes extending the analysis to learned features and more general data geometries, and developing frameworks to study low-rank effects in practical networks.

\subsubsection*{Acknowledgements}
The work of A. J. G. has been supported by Swiss National Science Foundation grant number 200021-204119. 
\subsection*{AI use statement}

In this work, we have not used generative AI tools for problem formulation, mathematical analysis, and the rest of the required disclosure tasks are not applicable to this work.
We used generative AI tools for 1) developing codes required for simulations and plotting, and 2) assist in writing the paper.
We have reviewed all AI-assisted work. LLM-generated code was verified and tested for correctness by an author. Portions of text written with the help of AI tools were reviewed carefully by the authors.
We take responsibility for the final content of this work, including text, claims, or artifacts produced with the aid of generative AI.


\bibliography{references} 

@misc{urfin_first_2026,
	title = {First {Learn}, {Then} {Memorize}: {The} {Spectral} {Bias} of {Diffusion} {Models}},
	shorttitle = {First {Learn}, {Then} {Memorize}},
	url = {http://arxiv.org/abs/2609.35377},
	doi = {10.48550/arXiv.2609.35377},
	urldate = {2026-10-06},
	publisher = {arXiv},
	author = {Urfin, Raphaël and Bonnaire, Tony and Biroli, Giulio and Mézard, Marc},
	month = sep,
	year = {2026},
	note = {arXiv:2609.35377 [cs.LG]},
}

@misc{hu_asymptotics_2024,
	title = {Asymptotics of {Random} {Feature} {Regression} {Beyond} the {Linear} {Scaling} {Regime}},
	url = {http://arxiv.org/abs/2403.08160},
	doi = {10.48550/arXiv.2403.08160},
	language = {en},
	urldate = {2024-12-02},
	publisher = {arXiv},
	author = {Hu, Hong and Lu, Yue M. and Misiakiewicz, Theodor},
	month = mar,
	year = {2024},
	note = {arXiv:2403.08160 [stat]},
}

@inproceedings{montanari_universality_2022,
	title = {Universality of empirical risk minimization},
	issn = {2640-3498},
	url = {https://proceedings.mlr.press/v178/montanari22a.html},
	language = {en},
	urldate = {2026-10-05},
	booktitle = {Proceedings of {Thirty} {Fifth} {Conference} on {Learning} {Theory}},
	publisher = {PMLR},
	author = {Montanari, Andrea and Saeed, Basil N.},
	month = jun,
	year = {2022},
	pages = {4310--4312},
}

@inproceedings{ye_provable_2025,
	title = {Provable {Separations} between {Memorization} and {Generalization} in {Diffusion} {Models}},
	url = {https://openreview.net/forum?id=42gfTZzyvV},
	language = {en},
	urldate = {2026-10-01},
	author = {Ye, Zeqi and Zhu, Qijie and Tao, Molei and Chen, Minshuo},
	month = oct,
	year = {2025},
}

@misc{subramonian_auto-fpt_2025,
	title = {auto-fpt: {Automating} {Free} {Probability} {Theory} {Calculations} for {Machine} {Learning} {Theory}},
	shorttitle = {auto-fpt},
	url = {http://arxiv.org/abs/2504.10754},
	doi = {10.48550/arXiv.2504.10754},
	urldate = {2026-10-01},
	publisher = {arXiv},
	author = {Subramonian, Arjun and Dohmatob, Elvis},
	month = apr,
	year = {2025},
	note = {arXiv:2504.10754 [cs.LG]},
}

@misc{george_asymptotic_2026,
	title = {Asymptotic {Learning} {Curves} for {Diffusion} {Models} with {Random} {Features} {Score} and {Manifold} {Data}},
	url = {http://arxiv.org/abs/2603.22962},
	doi = {10.48550/arXiv.2603.22962},
	urldate = {2026-10-01},
	publisher = {arXiv},
	author = {George, Anand Jerry and Macris, Nicolas},
	month = jun,
	year = {2026},
	note = {arXiv:2603.22962 [cs.LG]},
}

@misc{latourelle-vigeant_generalization_2026,
	title = {Generalization, memorization, and overfitting for diffusion models trained in the lazy high-dimensional regime},
	url = {http://arxiv.org/abs/2608.23938},
	doi = {10.48550/arXiv.2608.23938},
	urldate = {2026-09-27},
	publisher = {arXiv},
	author = {Latourelle-Vigeant, Hugo and Chewi, Sinho and Pooladian, Aram-Alexandre and Sous, John and Misiakiewicz, Theodor},
	month = aug,
	year = {2026},
	note = {arXiv:2608.23938 [stat.ML]},
}

@article{george_denoising_2026,
	title = {Denoising score matching with random features: insights on diffusion models from precise learning curves*},
	volume = {2026},
	issn = {1742-5468},
	shorttitle = {Denoising score matching with random features},
	url = {https://doi.org/10.1088/1742-5468/ae9379},
	doi = {10.1088/1742-5468/ae9379},
	language = {en},
	number = {8},
	urldate = {2026-09-02},
	journal = {Journal of Statistical Mechanics: Theory and Experiment},
	publisher = {IOP Publishing},
	author = {George, Anand Jerry and Veiga, Rodrigo and Macris, Nicolas},
	month = aug,
	year = {2026},
	pages = {084009},
}

@misc{merger_generalization_2026,
	title = {Generalization {Dynamics} of {Linear} {Diffusion} {Models}},
	url = {http://arxiv.org/abs/2505.24769},
	doi = {10.48550/arXiv.2505.24769},
	urldate = {2026-09-02},
	publisher = {arXiv},
	author = {Merger, Claudia and Goldt, Sebastian},
	month = jan,
	year = {2026},
	note = {arXiv:2505.24769 [stat.ML]},
}

@inproceedings{wang_analytical_2025,
	title = {An {Analytical} {Theory} of {Spectral} {Bias} in the {Learning} {Dynamics} of {Diffusion} {Models}},
	url = {https://openreview.net/forum?id=SDhOClkyqC},
	language = {en},
	urldate = {2026-09-02},
	author = {Wang, Binxu and Pehlevan, Cengiz},
	month = oct,
	year = {2025},
}

@misc{bodin_gradient_2022,
	title = {Gradient flow in the gaussian covariate model: exact solution of learning curves and multiple descent structures},
	shorttitle = {Gradient flow in the gaussian covariate model},
	url = {http://arxiv.org/abs/2212.06757},
	doi = {10.48550/arXiv.2212.06757},
	urldate = {2026-02-04},
	publisher = {arXiv},
	author = {Bodin, Antoine and Macris, Nicolas},
	month = dec,
	year = {2022},
	note = {arXiv:2212.06757 [stat]},
}

@inproceedings{george_analysis_2025,
	title = {Analysis of {Diffusion} {Models} for {Manifold} {Data}},
	issn = {2157-8117},
	doi = {10.1109/ISIT63088.2025.11195641},
	urldate = {2026-02-04},
	booktitle = {2025 {IEEE} {International} {Symposium} on {Information} {Theory} ({ISIT})},
	author = {George, Anand Jerry and Veiga, Rodrigo and Macris, Nicolas},
	month = jun,
	year = {2025},
	note = {ISSN: 2157-8117},
	pages = {1--6},
}

@article{mei_generalization_2022,
	title = {The {Generalization} {Error} of {Random} {Features} {Regression}: {Precise} {Asymptotics} and the {Double} {Descent} {Curve}},
	volume = {75},
	shorttitle = {The {Generalization} {Error} of {Random} {Features} {Regression}},
	language = {en},
	number = {4},
	urldate = {2025-01-13},
	journal = {Communications on Pure and Applied Mathematics},
	author = {Mei, Song and Montanari, Andrea},
	year = {2022},
	pages = {667--766},
}

@misc{buchanan_edge_2025,
	title = {On the {Edge} of {Memorization} in {Diffusion} {Models}},
	url = {http://arxiv.org/abs/2508.17689},
	doi = {10.48550/arXiv.2508.17689},
	urldate = {2025-08-29},
	publisher = {arXiv},
	author = {Buchanan, Sam and Pai, Druv and Ma, Yi and Bortoli, Valentin De},
	month = aug,
	year = {2025},
	note = {arXiv:2508.17689 [cs]},
}

@misc{saha_generalization_2025,
	title = {Generalization {Bound} for {Diffusion} {Models} using {Random} {Features}},
	url = {http://arxiv.org/abs/2310.04417},
	doi = {10.48550/arXiv.2310.04417},
	urldate = {2025-09-28},
	publisher = {arXiv},
	author = {Saha, Esha and Tran, Giang},
	month = aug,
	year = {2025},
	note = {arXiv:2310.04417 [stat]},
}

@misc{bonnaire_why_2025,
	title = {Why {Diffusion} {Models} {Don}'t {Memorize}: {The} {Role} of {Implicit} {Dynamical} {Regularization} in {Training}},
	shorttitle = {Why {Diffusion} {Models} {Don}'t {Memorize}},
	url = {http://arxiv.org/abs/2505.17638},
	doi = {10.48550/arXiv.2505.17638},
	urldate = {2025-09-28},
	publisher = {arXiv},
	author = {Bonnaire, Tony and Urfin, Raphaël and Biroli, Giulio and Mézard, Marc},
	month = may,
	year = {2025},
	note = {arXiv:2505.17638 [cs]},
}

@inproceedings{ross_geometric_2024,
	title = {A {Geometric} {Framework} for {Understanding} {Memorization} in {Generative} {Models}},
	language = {en},
	urldate = {2025-05-15},
	booktitle = {The {Thirteenth} {International} {Conference} on {Learning} {Representations}},
	author = {Ross, Brendan Leigh and Kamkari, Hamidreza and Wu, Tongzi and Hosseinzadeh, Rasa and Liu, Zhaoyan and Stein, George and Cresswell, Jesse C. and Loaiza-Ganem, Gabriel},
	month = oct,
	year = {2024},
}

@misc{kingma_adam_2017,
	title = {Adam: {A} {Method} for {Stochastic} {Optimization}},
	shorttitle = {Adam},
	doi = {10.48550/arXiv.1412.6980},
	urldate = {2025-04-16},
	publisher = {arXiv},
	author = {Kingma, Diederik P. and Ba, Jimmy},
	month = jan,
	year = {2017},
	note = {arXiv:1412.6980 [cs]},
}

@article{li_generalization_2023,
	title = {On the {Generalization} {Properties} of {Diffusion} {Models}},
	volume = {36},
	language = {en},
	urldate = {2025-01-26},
	journal = {Advances in Neural Information Processing Systems},
	author = {Li, Puheng and Li, Zhong and Zhang, Huishuai and Bian, Jiang},
	month = dec,
	year = {2023},
	pages = {2097--2127},
}

@misc{achilli_memorization_2025,
	title = {Memorization and {Generalization} in {Generative} {Diffusion} under the {Manifold} {Hypothesis}},
	doi = {10.48550/arXiv.2502.09578},
	urldate = {2025-02-19},
	publisher = {arXiv},
	author = {Achilli, Beatrice and Ambrogioni, Luca and Lucibello, Carlo and Mézard, Marc and Ventura, Enrico},
	month = feb,
	year = {2025},
	note = {arXiv:2502.09578 [cond-mat]},
}

@inproceedings{ronneberger_u-net_2015,
	address = {Cham},
	title = {U-{Net}: {Convolutional} {Networks} for {Biomedical} {Image} {Segmentation}},
	isbn = {978-3-319-24574-4},
	shorttitle = {U-{Net}},
	doi = {10.1007/978-3-319-24574-4_28},
	language = {en},
	booktitle = {Medical {Image} {Computing} and {Computer}-{Assisted} {Intervention} – {MICCAI} 2015},
	publisher = {Springer International Publishing},
	author = {Ronneberger, Olaf and Fischer, Philipp and Brox, Thomas},
	editor = {Navab, Nassir and Hornegger, Joachim and Wells, William M. and Frangi, Alejandro F.},
	year = {2015},
	pages = {234--241},
}

@inproceedings{pidstrigach_score-based_2022,
	title = {Score-{Based} {Generative} {Models} {Detect} {Manifolds}},
	volume = {35},
	booktitle = {Advances in {Neural} {Information} {Processing} {Systems}},
	publisher = {Curran Associates, Inc.},
	author = {Pidstrigach, Jakiw},
	year = {2022},
	pages = {35852--35865},
}

@book{gardiner_stochastic_2009,
	address = {Berlin Heidelberg},
	edition = {4th ed},
	series = {Springer series in synergetics},
	title = {Stochastic methods: a handbook for the natural and social sciences},
	isbn = {978-3-642-08962-6},
	shorttitle = {Stochastic methods},
	language = {eng},
	number = {13},
	publisher = {Springer},
	author = {Gardiner, Crispin W.},
	year = {2009},
}

@misc{cui_precise_2025,
	title = {A precise asymptotic analysis of learning diffusion models: theory and insights},
	shorttitle = {A precise asymptotic analysis of learning diffusion models},
	doi = {10.48550/arXiv.2501.03937},
	urldate = {2025-01-13},
	publisher = {arXiv},
	author = {Cui, Hugo and Pehlevan, Cengiz and Lu, Yue M.},
	month = jan,
	year = {2025},
	note = {arXiv:2501.03937 [cs]},
}

@article{helton_applications_2018,
	title = {Applications of realizations (aka linearizations) to free probability},
	volume = {274},
	number = {1},
	urldate = {2025-01-28},
	journal = {Journal of Functional Analysis},
	author = {Helton, J. William and Mai, Tobias and Speicher, Roland},
	month = jan,
	year = {2018},
	pages = {1--79},
}

@inproceedings{adlam_neural_2020,
	title = {The {Neural} {Tangent} {Kernel} in {High} {Dimensions}: {Triple} {Descent} and a {Multi}-{Scale} {Theory} of {Generalization}},
	issn = {2640-3498},
	shorttitle = {The {Neural} {Tangent} {Kernel} in {High} {Dimensions}},
	language = {en},
	urldate = {2025-01-28},
	booktitle = {Proceedings of the 37th {International} {Conference} on {Machine} {Learning}},
	publisher = {PMLR},
	author = {Adlam, Ben and Pennington, Jeffrey},
	month = nov,
	year = {2020},
	pages = {74--84},
}

@inproceedings{yoon_diffusion_2023,
	title = {Diffusion {Probabilistic} {Models} {Generalize} when {They} {Fail} to {Memorize}},
	language = {en},
	urldate = {2025-01-13},
	booktitle = {{ICML} 2023 {Workshop} on {Structured} {Probabilistic} {Inference} and {Generative} {Modeling}},
	author = {Yoon, TaeHo and Choi, Joo Young and Kwon, Sehyun and Ryu, Ernest K.},
	month = jul,
	year = {2023},
}

@article{ambrogioni_search_2024,
	title = {In {Search} of {Dispersed} {Memories}: {Generative} {Diffusion} {Models} {Are} {Associative} {Memory} {Networks}},
	volume = {26},
	copyright = {https://creativecommons.org/licenses/by/4.0/},
	issn = {1099-4300},
	shorttitle = {In {Search} of {Dispersed} {Memories}},
	doi = {10.3390/e26050381},
	language = {en},
	number = {5},
	urldate = {2025-01-28},
	journal = {Entropy},
	author = {Ambrogioni, Luca},
	month = apr,
	year = {2024},
	pages = {381},
}

@article{vincent_connection_2011,
	title = {A {Connection} {Between} {Score} {Matching} and {Denoising} {Autoencoders}},
	volume = {23},
	issn = {0899-7667, 1530-888X},
	doi = {10.1162/NECO_a_00142},
	language = {en},
	number = {7},
	urldate = {2025-01-18},
	journal = {Neural Computation},
	author = {Vincent, Pascal},
	month = jul,
	year = {2011},
	pages = {1661--1674},
}

@inproceedings{song_score-based_2020,
	title = {Score-{Based} {Generative} {Modeling} through {Stochastic} {Differential} {Equations}},
	language = {en},
	urldate = {2025-01-28},
	booktitle = {International {Conference} on {Learning} {Representations}},
	author = {Song, Yang and Sohl-Dickstein, Jascha and Kingma, Diederik P. and Kumar, Abhishek and Ermon, Stefano and Poole, Ben},
	month = oct,
	year = {2020},
}

@inproceedings{raya_spontaneous_2023,
	title = {Spontaneous symmetry breaking in generative diffusion models},
	volume = {36},
	booktitle = {Advances in {Neural} {Information} {Processing} {Systems}},
	publisher = {Curran Associates, Inc.},
	author = {Raya, Gabriel and Ambrogioni, Luca},
	year = {2023},
	pages = {66377--66389},
}

@misc{kamb_analytic_2024,
	title = {An analytic theory of creativity in convolutional diffusion models},
	doi = {10.48550/arXiv.2412.20292},
	urldate = {2025-01-28},
	publisher = {arXiv},
	author = {Kamb, Mason and Ganguli, Surya},
	month = dec,
	year = {2024},
	note = {arXiv:2412.20292 [cs]},
}

@inproceedings{kadkhodaie_generalization_2023,
	title = {Generalization in diffusion models arises from geometry-adaptive harmonic representations},
	language = {en},
	urldate = {2025-01-28},
	booktitle = {The {Twelfth} {International} {Conference} on {Learning} {Representations}},
	author = {Kadkhodaie, Zahra and Guth, Florentin and Simoncelli, Eero P. and Mallat, Stéphane},
	month = oct,
	year = {2023},
}

@misc{gu_memorization_2023,
	title = {On {Memorization} in {Diffusion} {Models}},
	doi = {10.48550/arXiv.2310.02664},
	urldate = {2025-01-26},
	publisher = {arXiv},
	author = {Gu, Xiangming and Du, Chao and Pang, Tianyu and Li, Chongxuan and Lin, Min and Wang, Ye},
	month = oct,
	year = {2023},
	note = {arXiv:2310.02664 [cs]},
}

@inproceedings{goldt_gaussian_2022,
	title = {The {Gaussian} equivalence of generative models for learning with shallow neural networks},
	issn = {2640-3498},
	language = {en},
	urldate = {2025-01-22},
	booktitle = {Proceedings of the 2nd {Mathematical} and {Scientific} {Machine} {Learning} {Conference}},
	publisher = {PMLR},
	author = {Goldt, Sebastian and Loureiro, Bruno and Reeves, Galen and Krzakala, Florent and Mezard, Marc and Zdeborova, Lenka},
	month = apr,
	year = {2022},
	pages = {426--471},
}

@inproceedings{gerace_generalisation_2020,
	title = {Generalisation error in learning with random features and the hidden manifold model},
	issn = {2640-3498},
	language = {en},
	urldate = {2025-01-22},
	booktitle = {Proceedings of the 37th {International} {Conference} on {Machine} {Learning}},
	publisher = {PMLR},
	author = {Gerace, Federica and Loureiro, Bruno and Krzakala, Florent and Mezard, Marc and Zdeborova, Lenka},
	month = nov,
	year = {2020},
	pages = {3452--3462},
}

@misc{far_spectra_2006,
	title = {Spectra of large block matrices},
	urldate = {2024-10-23},
	publisher = {arXiv},
	author = {Far, Reza Rashidi and Oraby, Tamer and Bryc, Wlodzimierz and Speicher, Roland},
	month = oct,
	year = {2006},
	note = {arXiv:cs/0610045},
}

@inproceedings{cui_analysis_2023,
	title = {Analysis of {Learning} a {Flow}-based {Generative} {Model} from {Limited} {Sample} {Complexity}},
	language = {en},
	urldate = {2025-01-28},
	booktitle = {The {Twelfth} {International} {Conference} on {Learning} {Representations}},
	author = {Cui, Hugo and Krzakala, Florent and Vanden-Eijnden, Eric and Zdeborova, Lenka},
	month = oct,
	year = {2023},
}

@inproceedings{chen_sampling_2022,
	title = {Sampling is as easy as learning the score: theory for diffusion models with minimal data assumptions},
	shorttitle = {Sampling is as easy as learning the score},
	language = {en},
	urldate = {2025-01-28},
	booktitle = {The {Eleventh} {International} {Conference} on {Learning} {Representations}},
	author = {Chen, Sitan and Chewi, Sinho and Li, Jerry and Li, Yuanzhi and Salim, Adil and Zhang, Anru},
	month = sep,
	year = {2022},
}

@inproceedings{chen_score_2023,
	title = {Score {Approximation}, {Estimation} and {Distribution} {Recovery} of {Diffusion} {Models} on {Low}-{Dimensional} {Data}},
	issn = {2640-3498},
	language = {en},
	urldate = {2025-01-28},
	booktitle = {Proceedings of the 40th {International} {Conference} on {Machine} {Learning}},
	publisher = {PMLR},
	author = {Chen, Minshuo and Huang, Kaixuan and Zhao, Tuo and Wang, Mengdi},
	month = jul,
	year = {2023},
	pages = {4672--4712},
}

@phdthesis{bodin_random_2024,
	title = {Random matrix methods for high-dimensional machine learning models},
	doi = {10.5075/epfl-thesis-10524},
	language = {en},
	urldate = {2025-01-24},
	school = {EPFL},
	author = {Bodin, Antoine Philippe Michel},
	year = {2024},
}

@inproceedings{bodin_model_2021,
	title = {Model, sample, and epoch-wise descents: exact solution of gradient flow in the random feature model},
	volume = {34},
	shorttitle = {Model, sample, and epoch-wise descents},
	urldate = {2025-01-24},
	booktitle = {Advances in {Neural} {Information} {Processing} {Systems}},
	publisher = {Curran Associates, Inc.},
	author = {Bodin, Antoine and Macris, Nicolas},
	year = {2021},
	pages = {21605--21617},
}

@inproceedings{benton_nearly_2023,
	title = {Nearly d-{Linear} {Convergence} {Bounds} for {Diffusion} {Models} via {Stochastic} {Localization}},
	language = {en},
	urldate = {2025-01-28},
	booktitle = {The {Twelfth} {International} {Conference} on {Learning} {Representations}},
	author = {Benton, Joe and Bortoli, Valentin De and Doucet, Arnaud and Deligiannidis, George},
	month = oct,
	year = {2023},
}

@misc{achilli_losing_2024,
	title = {Losing dimensions: {Geometric} memorization in generative diffusion},
	shorttitle = {Losing dimensions},
	urldate = {2024-11-17},
	publisher = {arXiv},
	author = {Achilli, Beatrice and Ventura, Enrico and Silvestri, Gianluigi and Pham, Bao and Raya, Gabriel and Krotov, Dmitry and Lucibello, Carlo and Ambrogioni, Luca},
	month = oct,
	year = {2024},
	note = {arXiv:2410.08727},
}

@inproceedings{han_neural_2023,
	title = {Neural {Network}-{Based} {Score} {Estimation} in {Diffusion} {Models}: {Optimization} and {Generalization}},
	shorttitle = {Neural {Network}-{Based} {Score} {Estimation} in {Diffusion} {Models}},
	language = {en},
	urldate = {2025-01-17},
	booktitle = {The {Twelfth} {International} {Conference} on {Learning} {Representations}},
	author = {Han, Yinbin and Razaviyayn, Meisam and Xu, Renyuan},
	month = oct,
	year = {2023},
}

@inproceedings{song_generative_2019,
	title = {Generative {Modeling} by {Estimating} {Gradients} of the {Data} {Distribution}},
	volume = {32},
	urldate = {2024-05-22},
	booktitle = {Advances in {Neural} {Information} {Processing} {Systems}},
	publisher = {Curran Associates, Inc.},
	author = {Song, Yang and Ermon, Stefano},
	year = {2019},
}

@inproceedings{somepalli_diffusion_2023,
	title = {Diffusion {Art} or {Digital} {Forgery}? {Investigating} {Data} {Replication} in {Diffusion} {Models}},
	issn = {2575-7075},
	shorttitle = {Diffusion {Art} or {Digital} {Forgery}?},
	urldate = {2025-01-13},
	booktitle = {2023 {IEEE}/{CVF} {Conference} on {Computer} {Vision} and {Pattern} {Recognition} ({CVPR})},
	author = {Somepalli, Gowthami and Singla, Vasu and Goldblum, Micah and Geiping, Jonas and Goldstein, Tom},
	month = jun,
	year = {2023},
	pages = {6048--6058},
}

@inproceedings{sohl-dickstein_deep_2015,
	title = {Deep {Unsupervised} {Learning} using {Nonequilibrium} {Thermodynamics}},
	issn = {1938-7228},
	language = {en},
	urldate = {2025-01-10},
	booktitle = {Proceedings of the 32nd {International} {Conference} on {Machine} {Learning}},
	publisher = {PMLR},
	author = {Sohl-Dickstein, Jascha and Weiss, Eric and Maheswaranathan, Niru and Ganguli, Surya},
	month = jun,
	year = {2015},
	pages = {2256--2265},
}

@article{shah_learning_2023,
	title = {Learning {Mixtures} of {Gaussians} {Using} the {DDPM} {Objective}},
	volume = {36},
	language = {en},
	urldate = {2025-01-14},
	journal = {Advances in Neural Information Processing Systems},
	author = {Shah, Kulin and Chen, Sitan and Klivans, Adam},
	month = dec,
	year = {2023},
	pages = {19636--19649},
}

@inproceedings{rahimi_random_2007,
	title = {Random {Features} for {Large}-{Scale} {Kernel} {Machines}},
	volume = {20},
	urldate = {2025-01-13},
	booktitle = {Advances in {Neural} {Information} {Processing} {Systems}},
	publisher = {Curran Associates, Inc.},
	author = {Rahimi, Ali and Recht, Benjamin},
	year = {2007},
}

@article{hu_universality_2023,
	title = {Universality {Laws} for {High}-{Dimensional} {Learning} {With} {Random} {Features}},
	volume = {69},
	number = {3},
	urldate = {2025-01-13},
	journal = {IEEE Transactions on Information Theory},
	author = {Hu, Hong and Lu, Yue M.},
	month = mar,
	year = {2023},
	note = {Conference Name: IEEE Transactions on Information Theory},
	pages = {1932--1964},
}

@inproceedings{ho_denoising_2020,
	title = {Denoising {Diffusion} {Probabilistic} {Models}},
	volume = {33},
	urldate = {2024-05-22},
	booktitle = {Advances in {Neural} {Information} {Processing} {Systems}},
	publisher = {Curran Associates, Inc.},
	author = {Ho, Jonathan and Jain, Ajay and Abbeel, Pieter},
	year = {2020},
	pages = {6840--6851},
}

@article{haussmann_time_1986,
	title = {Time {Reversal} of {Diffusions}},
	volume = {14},
	number = {4},
	urldate = {2025-01-10},
	journal = {The Annals of Probability},
	publisher = {Institute of Mathematical Statistics},
	author = {Haussmann, U. G. and Pardoux, E.},
	month = oct,
	year = {1986},
	pages = {1188--1205},
}

@article{bortoli_convergence_2022,
	title = {Convergence of denoising diffusion models under the manifold hypothesis},
	issn = {2835-8856},
	language = {en},
	urldate = {2025-01-14},
	journal = {Transactions on Machine Learning Research},
	author = {Bortoli, Valentin De},
	month = aug,
	year = {2022},
}

@article{biroli_dynamical_2024,
	title = {Dynamical regimes of diffusion models},
	volume = {15},
	copyright = {2024 The Author(s)},
	issn = {2041-1723},
	doi = {10.1038/s41467-024-54281-3},
	language = {en},
	number = {1},
	urldate = {2025-01-08},
	journal = {Nature Communications},
	publisher = {Nature Publishing Group},
	author = {Biroli, Giulio and Bonnaire, Tony and de Bortoli, Valentin and Mézard, Marc},
	month = nov,
	year = {2024},
	pages = {9957},
}

@article{anderson_reverse-time_1982,
	title = {Reverse-time diffusion equation models},
	volume = {12},
	number = {3},
	urldate = {2025-01-10},
	journal = {Stochastic Processes and their Applications},
	author = {Anderson, Brian D. O.},
	month = may,
	year = {1982},
	pages = {313--326},
}

@inproceedings{chen_improved_2023,
	address = {Honolulu, Hawaii, USA},
	series = {{ICML}'23},
	title = {Improved analysis of score-based generative modeling: user-friendly bounds under minimal smoothness assumptions},
	volume = {202},
	shorttitle = {Improved analysis of score-based generative modeling},
	urldate = {2025-01-13},
	booktitle = {Proceedings of the 40th {International} {Conference} on {Machine} {Learning}},
	publisher = {JMLR.org},
	author = {Chen, Hongrui and Lee, Holden and Lu, Jianfeng},
	month = jul,
	year = {2023},
	pages = {4735--4763},
}

@inproceedings{carlini_extracting_2023,
	address = {USA},
	series = {{SEC} '23},
	title = {Extracting training data from diffusion models},
	isbn = {978-1-939133-37-3},
	urldate = {2025-01-13},
	booktitle = {Proceedings of the 32nd {USENIX} {Conference} on {Security} {Symposium}},
	publisher = {USENIX Association},
	author = {Carlini, Nicholas and Hayes, Jamie and Nasr, Milad and Jagielski, Matthew and Sehwag, Vikash and Tramèr, Florian and Balle, Borja and Ippolito, Daphne and Wallace, Eric},
	month = aug,
	year = {2023},
	pages = {5253--5270},
}

\clearpage
\appendix
\thispagestyle{empty}

\onecolumn
\aistatstitle{Supplementary Materials}

\section{Proofs}
We first derive the finite-dimensional rank-constrained optimizer. We then express its training and test errors as spectral integrals and compute the required limiting transforms using a linear pencil. The moment calculations underlying this reduction are collected in Section~\ref{appndx:matrices}.
\subsection{Proof of Theorem~\ref{thm:rank_constrained_dsm}}
Let $\phi(x) = \act{\rat{W}{d}x}$. Consider $s(x) = -x+A\phi(x)$ and $x\sim\cN{0,\cC}$. Let $y_i = ax_i+\sqrt{h}z$. We have the following minimization problem
\begin{equation*}
   \min_{\text{rank}(A)\le r} \cL(A) = \sum_{i=1}^n\shortexpect_z{\norm{\sqrt{h}(A\phi(y_i)-y_i)+z}^2}.
\end{equation*}
Condition on the training samples and $W$, and assume $U\succ0$ as in Theorem~\ref{thm:rank_constrained_dsm}. Differentiating the quadratic objective gives the normal equation $A_{\text{LS}}U=\left(V_{\phi y}-\frac{1}{\sqrt{h}}V_{\phi z}\right)^T$, so the unconstrained optimizer is
\begin{equation}\label{eq:app_ls_optimizer}
    A_{\text{LS}} = \underbrace{\left(V_{\phi y}-\frac{1}{\sqrt{h}}V_{\phi z}\right)^T}_{:=V^T}U^{-1}, 
\end{equation}
where
\begin{equation}\label{eq:app_empirical_moments}
\begin{aligned}
    U &= \frac{1}{n}\sum_{i=1}^n\bE{z}{\phi(y_i)\phi(y_i)^T},\\
    V_{\phi z} &= \frac{1}{n}\sum_{i=1}^n\bE{z}{\phi(y_i)z^T},\\
    V_{\phi y} &= \frac{1}{n}\sum_{i=1}^n\bE{z}{\phi(y_i)y_i^T}.
\end{aligned}
\end{equation}
To impose the rank constraint, expand the objective around $A_{\text{LS}}$. The mixed term vanishes by the normal equation associated with~\eqref{eq:app_ls_optimizer}:
\begin{align*}
   \cL(A) &= \sum_{i=1}^n\shortexpect_z{\norm{\sqrt{h}(A-A_{\text{LS}})\phi(ax_i+\sqrt{h}z)+\sqrt{h}(A_{\text{LS}}\phi(ax_i+\sqrt{h}z)-(ax_i+\sqrt{h}z))+z}^2},\\
   &= \sum_{i=1}^n\shortexpect_z\norm{\sqrt{h}(A-A_{\text{LS}})\phi(ax_i+\sqrt{h}z)}^2+\sum_{i=1}^n\shortexpect_z\norm{\sqrt{h}(A_{\text{LS}}\phi(ax_i+\sqrt{h}z)-(ax_i+\sqrt{h}z))+z}^2,\\
\end{align*}
The residual term is independent of $A$. Dropping this term and positive multiplicative constant $h$, and using~\eqref{eq:app_empirical_moments}, reduces the optimization to a matrix approximation problem:
\begin{align*}
    \arg \min_{\text{rank}(A)\le r} \cL(A) &= \arg \min_{\text{rank}(A)\le r} \sum_{i=1}^n\shortexpect_z\norm{(A-A_{\text{LS}})\phi(ax_i+\sqrt{h}z)}^2,\\
    &= \arg \min_{\text{rank}(A)\le r} \norm{(A-A_{\text{LS}})U^{1/2}}.
\end{align*}
Let $A_{\text{LS}}U^{1/2} = S\Sigma R^T$ be the singular value decomposition. The Eckart--Young theorem in the Frobenius norm therefore gives 
\begin{equation*}
    \hat{A}_r = S\Sigma_r R^T U^{-1/2},
\end{equation*}
where $\Sigma_r$ retains the $r$ largest singular values and sets the others to zero.

\subsection{Proof of Theorem~\ref{thm:limiting_test_error}}\label{app:rank_constrained_learning}

The proof proceeds by reducing the losses to weighted resolvent traces, obtaining their limiting values from~\eqref{eqn:fixed_point_system}, and recovering the rank-truncated errors by Stieltjes inversion. Barred moments ($\bar U, \bar V_{\phi y}, \bar V_{\phi z}$) average over an independent test sample and its noise; empirical moments condition on the clean training samples.

\subsubsection{Test Error}
We assume that $\bE{}{\act{g}}=0$. The data covariance is diagonal with first $D$ diagonal entries equal to 1 and the remaining $d-D$ entries equal to 0. Let $\sigma_0^2 = \psi_D$, the normalized trace of the covariance matrix.
Accordingly, write $X = [X_1^T, 0]^T$, where $X_1$ is a $D\times n$ Gaussian random matrix, and partition $W=[W_1, W_2]$ into its data-subspace and normal-space blocks. Expanding the squared residual separates the test error into a quadratic term, a cross term, and an estimator-independent term. With $y = ax+\sqrt{h}z$, we have
\begin{equation}\label{eq:app_test_expansion}
\begin{aligned}
    \cE_{\text{test}}(\hat{A}_r) &= \frac{1}{d}\shortexpect_{x\sim P_0,z\sim\cN{0,1}}{\norm{\sqrt{h}(\hat{A}_r\phi(y)-y)+z}^2},\\
    &= \frac{h}{d}\tr{\hat{A}_r^T\hat{A}_r\bar{U}}+\frac{2h}{d}\tr{\hat{A}_r\left(\frac{\bar{V}_{\phi z}}{\sqrt{h}}-\bar{V}_{\phi y}\right)} +\frac{h}{d}\tr{a^2\cC+hI}+\left(1-2h\right),
\end{aligned}
\end{equation}
where $\bar{U} = \bE{x,z}{\phi(ax+\sqrt{h}z)\phi(ax+\sqrt{h}z)^\top}$, $\bar{V}_{\phi y} = \bE{x,z}{\phi(ax+\sqrt{h}z)(ax+\sqrt{h}z)^\top}$, and $\bar{V}_{\phi z} = \bE{x,z}{\phi(ax+\sqrt{h}z)z^\top}$.

We need to compute $V_{\phi y},V_{\phi z},U,\bar{V}_{\phi y},\bar{V}_{\phi z},\bar{U}$ in order to get an expression for $\cE_{\text{test}}$. Note that, in order to derive an asymptotically precise expression for $\cE_{\text{test}}$, it suffices to obtain these moment matrices up to leading order. In particular, we can neglect $O(1/d^{1-\epsilon})$ terms entry-wise in these matrices, for $0\le\epsilon<1/2$ and for the diagonal terms $\epsilon$ can be in $[0,1)$. This can be justifies as follows. By union bound, such an error matrix $\Delta$ will have Frobenius $\norm{\Delta}_F = O(d^\epsilon)$ with probability tending to $1$ faster than any power. In the expressions for test and train errors, these error matrices typically appears as $\frac{1}{d}\tr{\Delta A}$, where $A$ is a matrix with operator norm $\bigO{1}$ w.r.t. $d$. Using standard inequalities, we can upper bound $\frac{1}{d}\tr{\Delta A}$ as
\begin{align*}
    \frac{1}{d}\tr{\Delta A} &\le\frac{1}{d}\norm{\Delta}_\ast\norm{A}_{\text{op}}\\
    &\le \frac{1}{d}\sqrt{\text{rank}(\Delta)}\norm{\Delta}_F\norm{A}_{\text{op}}\\
    &= \bigO{\frac{1}{\sqrt{d}}}\norm{\Delta}_F = O(d^{\epsilon-1/2}). 
\end{align*}
where the last equality holds with probability tending to $1$ faster than any inverse power. Thus, asymptotically these error terms vanish with high probability. 
Section~\ref{appndx:matrices} develops the moment calculations underlying the following asymptotic representations, which we use in the spectral reduction below:

\begin{equation}\label{eq:app_moment_representations}
\begin{aligned}
    \bar{V}_{\phi y} &= \frac{\mu_1(\sigma)}{\sigma}\rat{W\tilde{\cC}}{d},\\
    \bar{V}_{\phi z} &= \sqrt{h}\frac{\mu_1(\sigma)}{\sigma}\rat{W}{d},\\
    V_{\phi y} &= a\rat{H}{n}\rat{X^T}{n}+\sqrt{h}\bar{V}_{\phi z},\\
    V_{\phi z} &= \sqrt{h}\frac{\mu_1(\sigma)}{\sigma}\rat{W}{d} =  \bar{V}_{\phi z},\\
    \bar{U} &=  \frac{\mu_1^2(\sigma)}{\sigma^2}\rat{W}{d}\tilde{\cC}\rat{W^T}{d} + v^2(\sigma) I_p \;,\\
    U &= \rat{H}{n}\rat{H^T}{n}+h\frac{\mu_1^2(\sigma)}{\sigma^2}\rat{W}{d}\rat{W^T}{d}+s^2I_p \;,
\end{aligned}
\end{equation}
where $\tilde{\cC} = a^2\cC+hI$, $H= \varrho_0\left(\rat{W}{d}X\right)$, and 
\begin{align}
s^2 &= \norm{\varrho(\sigma\;\cdot)}^2-c(a^2\sigma_0^2/\sigma^2,\sigma)-h\frac{\mu_1^2(\sigma)}{\sigma^2} \;,\label{eqn:s2}\\
v^2(\sigma) &= \norm{\varrho(\sigma\;\cdot)}^2-\mu_1^2(\sigma).\label{eqn:v2}
\end{align}
Here, for standard Gaussian random variables $(u,v)$ with correlation $\gamma$, we define $c(\gamma,\kappa) = \bE{u,v}{\varrho(\kappa u)\varrho(\kappa v)}$.

\begin{assumption}[Gaussian equivalence]\label{asmptn:GEP}
Set $\mu=\mu_1(\sigma)/\sigma$ and define the Gaussian surrogate
\begin{equation*}
    H^{\mathrm G} = a\mu\frac{WX}{\sqrt d}+v_0\Omega\;,
\end{equation*}
where $\Omega\in\R^{p\times n}$ has i.i.d. $\cN{0,1}$ entries and is independent of $W$ and $X$, and
\begin{align}
    v_0^2 &= c(a^2\sigma_0^2/\sigma^2,\sigma)-\left(a\sigma_0\frac{\mu_1(\sigma)}{\sigma}\right)^2\;.\label{eqn:v02}
\end{align}
For $F\in\{H,H^{\mathrm G}\}$, define
\begin{align*}
    U_F &= \frac{FF^T}{n}+h\mu^2\frac{WW^T}{d}+s^2I_p,\\
    V_F &= \frac{aFX^T}{n}-a^2\mu\frac{W}{\sqrt d},\\
    Q_F(z) &= (V_FV_F^T-zU_F)^{-1},
\end{align*}
and the normalized traces
\begin{align*}
    E_{0,F}(z) &= \frac{1}{d}\tr{V_F^TQ_F(z)V_F},\\
    E_{1,F}(z) &= \frac{1}{d}\tr{Q_F(z)},\\
    E_{2,F}(z) &= \frac{1}{d}\tr{Q_F(z)\frac{W_1W_1^T}{d}},\\
    E_{3,F}(z) &= \frac{1}{d}\tr{Q_F(z)\frac{W_2W_2^T}{d}}.
\end{align*}
Assume $s^2>0$ and that, in the proportional limit of Theorem~\ref{thm:limiting_test_error}, for every fixed $z\in\mathbb C$ with $\Im z>0$,
\[
    E_{j,H}(z)-E_{j,H^{\mathrm G}}(z)
    \xrightarrow{\mathbb P}0,
    \qquad j\in\{0,1,2,3\}.
\]
\end{assumption}

Conditioning on this assumption, permits replacing $H$ by $H^{\mathrm G}$ in these four limiting traces, with the same $W$ and $X$ in both models.

Let $\tilde{\cC} = a^2\cC+hI$. Note that $\tilde{\cC}$ is diagonal with first $D$ diagonal entries equal to 1 and the remaining $d-D$ entries equal to $h$.
To incorporate rank truncation into~\eqref{eq:app_test_expansion}, introduce the two Gram matrices associated with the singular value decomposition of $V^TU^{-1/2}$:
\begin{equation}\label{eq:app_gram_matrices}
\begin{aligned}
    M &= S\Sigma\Sigma^TS^T = A_{\text{LS}}UA_{\text{LS}}^T = V^TU^{-1}V,\\
    \tilde{M} &= R\Sigma^T\Sigma R^T = U^{1/2}A_{\text{LS}}^TA_{\text{LS}}U^{1/2} = U^{-1/2}VV^TU^{-1/2}.
\end{aligned}
\end{equation}
Recall that $\psi_p=p/d$, $\psi_n=n/d$, $\psi_D=D/d$, and $\psi_r=r/d$. The matrices in~\eqref{eq:app_gram_matrices} have dimensions $d\times d$ and $p\times p$, respectively, and share the same nonzero eigenvalues. Let $\Gamma_{\psi_r}$ be the threshold such that $\psi_r$ fraction of eigenvalues of $M$ are above $\Gamma_{\psi_r}$. We can use Stone's integration representation to get the rank-truncated versions of $M$ and $\tilde{M}$. Let $S_i$ be the $i$th column of $S$, and $s_i$ be the corresponding singular value:
\begin{equation}\label{eq:app_feature_truncation}
\begin{aligned}
    S\Sigma_r\Sigma_r^T S^T &= \sum_{i\le r} s_i^2S_iS_i^T, \\
    &= \frac{1}{\pi}\lim_{K\to\infty}\lim_{\eta\downarrow0}\int_{\Gamma_{\psi_r}}^K dx\; x\Im(M-(x+i\eta)I)^{-1}.
\end{aligned}
\end{equation}
Similarly,
\begin{equation}\label{eq:app_output_truncation}
\begin{aligned}
    R\Sigma_r^T\Sigma_r R^T = \frac{1}{\pi}\lim_{K\to\infty}\lim_{\eta\downarrow0}\int_{\Gamma_{\psi_r}}^K dx\; x\Im(\tilde M-(x+i\eta)I)^{-1}.
\end{aligned}
\end{equation}
Using~\eqref{eq:app_gram_matrices}, the output-space resolvent can be expressed through the feature-space matrix $VV^T-zU$:
\begin{equation}\label{eq:app_resolvent_identity}
\begin{aligned}
    (M-zI)^{-1} &= (V^TU^{-1}V-zI)^{-1},\\
    &= \frac{1}{z}\left((V^TU^{-1}V-zI)^{-1}V^TU^{-1}V-I\right),\\
    &= \frac{1}{z}\left(V^T(VV^T-zU)^{-1}V-I\right),\\
\end{aligned}
\end{equation}
Let $Q(z):= (VV^T-zU)^{-1}$. Substituting~\eqref{eq:app_moment_representations} into the definition of $V$ and using $h-1=-a^2$ gives 
\begin{align*}
    V &= V_{\phi y}-\frac{1}{\sqrt{h}}V_{\phi z}\\
    &= a\rat{H}{n}\rat{X^T}{n}+(h-1)\frac{\mu_1(\sigma)}{\sigma}\rat{W}{d},\\
    &= a\rat{H}{n}\rat{X^T}{n}-a^2\frac{\mu_1(\sigma)}{\sigma}\rat{W}{d}.
\end{align*}
Because $X^T(\cC-I)=0$, multiplication by $\cC-I$ removes the data-dependent term:
\begin{align*}
    V(\cC-I) &=  a^2\frac{\mu_1(\sigma)}{\sigma}\rat{[0, W_2]}{d}.
\end{align*}
and
\begin{equation}\label{eq:app_cross_identity}
\begin{aligned}
    \frac{\bar{V}_{\phi z}}{\sqrt{h}}-\bar{V}_{\phi y} &=  \frac{\mu_1(\sigma)}{\sigma}\rat{W(I-\tilde{\cC})}{d},\\
    &=  (1-h)\frac{\mu_1(\sigma)}{\sigma}\rat{[0, W_2]}{d},\\
    &=  a^2\frac{\mu_1(\sigma)}{\sigma}\rat{[0, W_2]}{d},\\
    &= V(\cC-I).
\end{aligned}
\end{equation}
Next, for the quadratic term in~\eqref{eq:app_test_expansion}, we substitute the optimizer and apply~\eqref{eq:app_feature_truncation}. Moving $U^{-1/2}$ through the resolvent produces $Q$; inserting~\eqref{eq:app_moment_representations} then separates the contributions of the two blocks of $W$:
\begin{align*}
    \frac{1}{d}\tr{\hat{A}_r^T\hat{A}_r\bar{U}} &= \frac{1}{d}\tr{U^{-1/2}R\Sigma_r^T\Sigma_r R^TU^{-1/2}\bar{U}},\\
    &= \frac{1}{\pi}\lim_{K\to\infty}\lim_{\eta\downarrow0}\int_{\Gamma_{\psi_r}}^K dx\; x\Im\frac{1}{d}\tr{U^{-1/2}(\tilde{M}-(x+i\eta)I)^{-1}U^{-1/2}\bar{U}},\\
    &= \frac{1}{\pi}\lim_{K\to\infty}\lim_{\eta\downarrow0}\int_{\Gamma_{\psi_r}}^K dx\; x\Im\frac{1}{d}\tr{U^{-1/2}(U^{-1/2}VV^TU^{-1/2}-(x+i\eta)I)^{-1}U^{-1/2}\bar{U}},\\
    &= \frac{1}{\pi}\lim_{K\to\infty}\lim_{\eta\downarrow0}\int_{\Gamma_{\psi_r}}^K dx\; x\Im\frac{1}{d}\tr{(VV^T-(x+i\eta)U)^{-1}\bar{U}},\\
    &= \frac{1}{\pi}\lim_{K\to\infty}\lim_{\eta\downarrow0}\int_{\Gamma_{\psi_r}}^K dx\; x\Im\frac{1}{d}\tr{Q(x+i\eta)\left(\frac{\mu_1^2(\sigma)}{\sigma^2}\rat{W}{d}\tilde{\cC}\rat{W^T}{d} + v^2(\sigma) I_p\right)},\\
    &= \frac{1}{\pi}\lim_{K\to\infty}\lim_{\eta\downarrow0}\int_{\Gamma_{\psi_r}}^K dx\; x\Im\frac{1}{d}\tr{Q(x+i\eta)\left(\frac{\mu_1^2(\sigma)}{\sigma^2}\left(\rat{W_1}{d}\rat{W_1^T}{d}+h\rat{W_2}{d}\rat{W_2^T}{d}\right) + v^2(\sigma) I_p\right)},\\
    &= \frac{1}{\pi}\lim_{K\to\infty}\lim_{\eta\downarrow0}\int_{\Gamma_{\psi_r}}^K dx\; x\Im\left(\frac{\mu_1^2(\sigma)}{\sigma^2}E_{2}(x+i\eta)+\frac{h\mu_1^2(\sigma)}{\sigma^2}E_{3}(x+i\eta)+{v^2(\sigma)}E_{1}(x+i\eta)\right),
\end{align*}
where
\begin{equation}\label{eq:app_weighted_traces}
\begin{aligned}
    E_{1}(z) &= \frac{1}{d}\tr{Q(z)},\\
    E_{2}(z) &= \frac{1}{d}\tr{\rat{W_1^T}{d}Q(z)\rat{W_1}{d}},\\
    E_{3}(z) &= \frac{1}{d}\tr{\rat{W_2^T}{d}Q(z)\rat{W_2}{d}}.
\end{aligned}
\end{equation}
For the cross term in~\eqref{eq:app_test_expansion}, use~\eqref{eq:app_cross_identity}, followed by~\eqref{eq:app_output_truncation} and~\eqref{eq:app_resolvent_identity}.
\begin{align*}
    \frac{1}{d}\tr{\hat{A}_r\left(\frac{\bar{V}_{\phi z}}{\sqrt{h}}-\bar{V}_{\phi y}\right)} &= \frac{1}{d}\tr{S\Sigma_r R^TU^{-1/2}V(\cC-I)},\\
    &= \frac{1}{d}\tr{S\Sigma_r R^TU^{1/2}A_{\text{LS}}^T(\cC-I)},\\
    &= \frac{1}{d}\tr{S\Sigma_r \Sigma^TS^T(\cC-I)} = \frac{1}{d}\tr{S\Sigma_r \Sigma_r^TS^T(\cC-I)},\\
    &= \frac{1}{\pi}\lim_{K\to\infty}\lim_{\eta\downarrow0}\int_{\Gamma_{\psi_r}}^K dx\; x\Im\frac{1}{d}\tr{(M-(x+i\eta)I)^{-1}(\cC-I)},\\
    &= \frac{1}{\pi}\lim_{K\to\infty}\lim_{\eta\downarrow0}\int_{\Gamma_{\psi_r}}^K dx\; x\Im\frac{1}{d}\tr{\frac{1}{(x+i\eta)}V^T(VV^T-(x+i\eta)U)^{-1}V(\cC-I)},\\
    &= -a^4\frac{\mu_1^2(\sigma)}{\sigma^2}\frac{1}{\pi}\lim_{K\to\infty}\lim_{\eta\downarrow0}\int_{\Gamma_{\psi_r}}^K dx\; x\Im\frac{1}{d}\tr{\frac{1}{(x+i\eta)}\rat{W_2^T}{d}Q(x+i\eta)\rat{W_2}{d}},\\
    &= -a^4\frac{\mu_1^2(\sigma)}{\sigma^2}\frac{1}{\pi}\lim_{K\to\infty}\lim_{\eta\downarrow0}\int_{\Gamma_{\psi_r}}^K dx\; x\Im\frac{E_{3}(x+i\eta)}{(x+i\eta)}.\\
\end{align*}
The test error therefore requires the limiting weighted traces in~\eqref{eq:app_weighted_traces}. Determining $\Gamma_{\psi_r}$ also requires the spectrum of $M$. Taking the normalized trace of~\eqref{eq:app_resolvent_identity} gives its Stieltjes transform:
\begin{equation}\label{eq:app_stieltjes_trace}
    \textbf{s}_{M}(z) = \frac{1}{d}\tr{(M-zI)^{-1}} = \frac{1}{zd}\tr{V^TQ(z)V}-\frac{1}{z} = \frac{1}{z}(E_0(z)-1),
\end{equation}
where 
\[E_0(z) = \frac{1}{d}\tr{V^TQ(z)V}.\]
To summarize, we need the four traces $E_0, E_1, E_2, E_3$ to compute the test error.
After expressing training error also in terms of these traces, we later construct a linear pencil that encodes the four traces.
\subsubsection{Train Error}
The training error has the same quadratic structure as~\eqref{eq:app_test_expansion}, with empirical moments replacing population moments. These terms simplify because the same empirical moments define $\hat A_r$.
\begin{equation}\label{eq:app_train_expansion}
\begin{aligned}
    \cE_{\text{train}}(\hat{A}_r) &= \frac{1}{dn}\sum_{i=1}^n\shortexpect_{z}{\norm{\sqrt{h}(\hat{A}_r\phi(y_i)-y_i)+z}^2},\\
    &= \frac{h}{d}\tr{\hat{A}_r^T\hat{A}_r U}+\frac{2h}{d}\tr{\hat{A}_r\left(\frac{V_{\phi z}}{\sqrt{h}}-V_{\phi y}\right)} +h\sigma^2+\left(1-2h\right),
\end{aligned}
\end{equation}

\begin{align*}
    \frac{1}{d}\tr{\hat{A}_r^T\hat{A}_rU} &= \frac{1}{d}\tr{U^{-1/2}R\Sigma_r^T\Sigma_r R^TU^{-1/2}U},\\
    &= \frac{1}{\pi}\lim_{K\to\infty}\lim_{\eta\downarrow0}\int_{\Gamma_{\psi_r}}^K dx\; x\Im\frac{1}{d}\tr{(\tilde{M}-(x+i\eta)I)^{-1}},\\
    &= \frac{1}{\pi}\lim_{K\to\infty}\lim_{\eta\downarrow0}\int_{\Gamma_{\psi_r}}^K dx\; x\Im\frac{1}{d}\tr{(U^{-1/2}VV^TU^{-1/2}-(x+i\eta)I)^{-1}},\\
    &= \frac{1}{\pi}\lim_{K\to\infty}\lim_{\eta\downarrow0}\int_{\Gamma_{\psi_r}}^K dx\; x\Im\frac{1}{d}\left(\tr{(M-(x+i\eta)I)^{-1}}-(p-d)\frac{1}{x+i\eta}\right),\\
    &= \frac{1}{\pi}\lim_{K\to\infty}\lim_{\eta\downarrow0}\int_{\Gamma_{\psi_r}}^K dx\; x\Im\frac{1}{d}\tr{(M-(x+i\eta)I)^{-1}}.
\end{align*}
The extra term involving $p-d$ accounts for the different numbers of zero eigenvalues of $M$ and $\tilde M$. It contributes no mass above the positive cutoff. For the cross term, use $V^T=A_{\mathrm{LS}}U$ from~\eqref{eq:app_ls_optimizer}:
\begin{align*}
    \frac{1}{d}\tr{\hat{A}_r\left(\frac{V_{\phi z}}{\sqrt{h}}-V_{\phi y}\right)} &= -\frac{1}{d}\tr{S\Sigma_r R^TU^{-1/2}V},\\
    &= -\frac{1}{d}\tr{S\Sigma_r R^TU^{1/2}A_{\text{LS}}^T},\\
    &= -\frac{1}{d}\tr{S\Sigma_r \Sigma^TS^T} = -\frac{1}{d}\tr{S\Sigma_r \Sigma_r^TS^T},\\
    &= -\frac{1}{\pi}\lim_{K\to\infty}\lim_{\eta\downarrow0}\int_{\Gamma_{\psi_r}}^K dx\; x\Im\frac{1}{d}\tr{(M-(x+i\eta)I)^{-1}}.\\
\end{align*}

Combining the two contributions with their coefficients in~\eqref{eq:app_train_expansion} leaves a negative upper-tail spectral integral. Unlike the test error, the training error requires only the eigenvalue distribution of $M$.

\subsubsection{Linear Pencil}
The pencil replaces products of random matrices in $Q(z)$ by a larger matrix with blocks linear in $W_1,W_2,X_1$, and $\Omega$. Selected inverse blocks recover the traces in~\eqref{eq:app_weighted_traces} and~\eqref{eq:app_stieltjes_trace}.
Set $\mu:=\mu_1(\sigma)/\sigma$. With block sizes $(p,D,d-D,n,D,n,D,D,d-D)$, the pencil is
\begin{equation}\label{eqn:linear_pencil}
\resizebox{\textwidth}{!}{$\displaystyle
L(z)=
\begin{pmatrix}
I_p & -\frac{h\mu^2 W_1}{\sqrt{d}} & \frac{a^4\mu^2 W_2}{z\sqrt{d}} & -\frac{v_0\Omega}{\sqrt{n}} & -\frac{a^2\mu W_1}{z\sqrt{d}} & \frac{av_0\Omega}{z\sqrt{n}} & -\frac{a\mu W_1}{\sqrt{d}} & \frac{a^2\mu W_1}{z\sqrt{d}} & -\frac{h\mu^2 W_2}{\sqrt{d}} \\
\frac{W_1^T}{s^2\sqrt{d}} & I_D & 0 & 0 & 0 & 0 & 0 & 0 & 0 \\
\frac{W_2^T}{s^2\sqrt{d}} & 0 & I_{d-D} & 0 & 0 & 0 & 0 & 0 & 0 \\
\frac{v_0\Omega^T}{s^2\sqrt{n}} & -\frac{a\mu X_1^T}{\sqrt{n}} & 0 & I_n & 0 & 0 & 0 & 0 & 0 \\
-\frac{a^2\mu W_1^T}{s^2\sqrt{d}} & 0 & 0 & -\frac{aX_1}{\sqrt{n}} & I_D & 0 & 0 & 0 & 0 \\
0 & 0 & 0 & 0 & -\frac{X_1^T}{\sqrt{n}} & I_n & 0 & 0 & 0 \\
0 & 0 & 0 & -\frac{X_1}{\sqrt{n}} & 0 & 0 & I_D & 0 & 0 \\
0 & 0 & 0 & 0 & 0 & -\frac{X_1}{\sqrt{n}} & 0 & I_D & 0 \\
\frac{W_2^T}{s^2\sqrt{d}} & 0 & 0 & 0 & 0 & 0 & 0 & 0 & I_{d-D}
\end{pmatrix}.
$}
\end{equation}
Here, each zero has the corresponding block dimensions. Eliminating the auxiliary blocks of~\eqref{eqn:linear_pencil} by block inversion recovers $Q(z)$ and gives the following identities. Block indices in this display start at one: 
\begin{align*}
    E_0(z) &= \left(1-\frac{1}{D}\tr{L^{-1}_{55}}\right)\psi_D+\left(1-\frac{1}{d-D}\tr{L^{-1}_{33}}\right)(1-\psi_D),\\
    E_1(z) &= -\frac{1}{ds^2z}\tr{L^{-1}_{11}},\\
    E_2(z) &= -\frac{1}{da^2\mu}\tr{L^{-1}_{28}},\\
    E_3(z) &= \frac{1}{a^4\mu^2}\left(1-\frac{1}{d-D}\tr{L^{-1}_{33}}\right)(1-\psi_D).
\end{align*}
Operator-valued free probability theory then characterizes the limiting normalized block traces through a closed system of fixed-point equations, from which the limits of $\{E_i(z)\}_{i=0,\cdots,3}$ are recovered.
\subsubsection{Fixed-Point Equations }
We use the \texttt{auto-fpt} library of \citet{subramonian_auto-fpt_2025} to symbolically generate the fixed-point equations from the linear pencil in~\eqref{eqn:linear_pencil}. The library automates free-probability calculations for normalized traces of rational expressions in large random matrices. Applied to our pencil, it produces equations for the limiting block traces needed to recover $e_1,e_2,e_3$ and the Stieltjes transform of $M$. We present the resulting system below after eliminating redundant variables.

Let $\psi_p=p/d$, $\psi_n=n/d$, and $\psi_D=D/d$. We use zero-based block indices and write $G_{ij}=G_{ij}(z)$ for the scalar unknowns which are the appropriately normalized traces of square matrix blocks $(L^{-1})_{ij}$. Thus $G_{00}$ corresponds to the first diagonal block in the preceding display. The relations
\begin{equation}\label{eqn:fpe_substitutions}
    G_{16}=-\frac{z}{a}G_{17},\qquad
    G_{46}=-\frac{z}{a}G_{47},\qquad
    G_{28}=-\frac{hz}{a^4}G_{82}
\end{equation}
reduce the system to eighteen equations. Only the shared denominators are abbreviated:
\begingroup
\allowdisplaybreaks
\begin{align}
\mathcal D_0 &={}
\mu^3\psi_Da^2hzG_{14}
+\mu^2\psi_D\big[a^4(G_{44}-G_{74})+a^3zG_{64}-hzG_{11}\big]\notag\\
&\quad+\mu^2(1-\psi_D)\big[a^4G_{22}-hz(2G_{82}+G_{88})\big]\notag\\
&\quad+\mu\psi_D\big[a^2(G_{71}-G_{41})-azG_{61}\big]
+v_0^2(aG_{53}-zG_{33})-s^2z,\label{eqn:fpe_d0}\\
\mathcal D_1 &={}
\mu^2\psi_pG_{00}\Big[a^4\big((G_{33}-1)(G_{55}-1)-G_{35}G_{53}\big)\notag\\
&\qquad+a^3(zG_{35}+G_{53})-a^2zG_{33}+ahzG_{35}-hz\Big]\notag\\
&\quad+s^2z(aG_{35}-1),\label{eqn:fpe_d1}\\
\mathcal D_2 &={}
\mu\psi_D^2a^2s^2z(G_{14}G_{47}-G_{17}G_{44})
-\mu\psi_D\psi_na^2s^2zG_{14}\notag\\
&\quad+\mu\psi_D\psi_ns^2z^2G_{17}
-\psi_D\psi_ns^2zG_{47}
-\psi_D\psi_pa^2v_0^2G_{00}G_{44}\notag\\
&\quad+\psi_n^2s^2z+\psi_n\psi_pv_0^2zG_{00},\label{eqn:fpe_d2}\\
\mathcal D_3 &={}s^2z+\mu^2\psi_p(hz-a^4)G_{00}.\label{eqn:fpe_d3}
\end{align}
The fixed-point equations are as follows:
\begin{subequations}\label{eqn:fixed_point_system}
    \begin{align}
    G_{00} &=-\frac{s^2z}{\mathcal D_0},\\
G_{11}&=\frac{\mu^2\psi_pG_{00}\big[a^4(1-G_{55})+a^3zG_{35}\big]+s^2z(aG_{35}-1)}{\mathcal D_1},\label{eqn:fpe_g11}\\
G_{14}&=\frac{\mu\psi_paG_{00}\big[a(1-G_{55})+zG_{35}\big]}{\mathcal D_1},\label{eqn:fpe_g14}\\
G_{17}&=\frac{\mu\psi_pa^2G_{00}(aG_{35}-1)}{\mathcal D_1},\label{eqn:fpe_g17}\\
G_{41}&=-\frac{\mu a^2\big[\mu^2\psi_pG_{00}(-a^3G_{53}+a^2zG_{33}+hz)+s^2zG_{33}\big]}{\mathcal D_1},\label{eqn:fpe_g41}\\
G_{44}&=-\frac{\mu^2\psi_pG_{00}(-a^3G_{53}+a^2zG_{33}+hz)+s^2z}{\mathcal D_1},\label{eqn:fpe_g44}\\
G_{47}&=-\frac{\mu^2\psi_pa^4G_{00}(G_{33}-1)}{\mathcal D_1},\label{eqn:fpe_g47}\\
G_{61}&=-\frac{\mu a}{\mathcal D_1}\Big[\mu^2\psi_pG_{00}
\big(a^4(G_{33}G_{55}-G_{33}-G_{35}G_{53})+ahzG_{35}\big)\notag\\
&\hspace{30mm}+s^2zG_{33}\Big],\label{eqn:fpe_g61}\\
G_{64}&=-\frac{1}{\mathcal D_1}\Big[\mu^2\psi_pG_{00}
\big(a^3(G_{33}G_{55}-G_{33}-G_{35}G_{53})+hzG_{35}\big)\notag\\
&\hspace{30mm}+s^2zG_{35}\Big],\label{eqn:fpe_g64}\\
G_{71}&=-\frac{\mu a}{\mathcal D_1}\Big[\mu^2\psi_pG_{00}
\big(-a^4G_{53}+a^3z(G_{33}G_{55}-G_{35}G_{53})+ahzG_{55}\big)\notag\\
&\hspace{30mm}+s^2z\big(a(G_{33}G_{55}-G_{35}G_{53})+G_{53}\big)\Big],\label{eqn:fpe_g71}\\
G_{74}&=-\frac{1}{\mathcal D_1}\Big[\mu^2\psi_pG_{00}
\big(-a^3G_{53}+a^2z(G_{33}G_{55}-G_{35}G_{53})+hzG_{55}\big)\notag\\
&\hspace{30mm}+s^2zG_{55}\Big],\label{eqn:fpe_g74}\\
G_{33}&=\frac{\psi_ns^2z(\psi_n-\psi_DG_{47})}{\mathcal D_2},\label{eqn:fpe_g33}\\
G_{35}&=\frac{\psi_na\big(\mu\psi_Ds^2zG_{17}+\psi_pv_0^2G_{00}\big)}{\mathcal D_2},\label{eqn:fpe_g35}\\
G_{53}&=\frac{\psi_D\psi_ns^2z\left(aG_{44}-\frac{z}{a}G_{47}\right)}{\mathcal D_2},\label{eqn:fpe_g53}\\
G_{55}&=\frac{\psi_nz\big[\psi_ns^2+\psi_pv_0^2G_{00}-\mu\psi_Da^2s^2G_{14}+\mu\psi_Ds^2zG_{17}\big]}{\mathcal D_2},\label{eqn:fpe_g55}\\
G_{22}&=\frac{s^2z+\mu^2\psi_phzG_{00}}{\mathcal D_3},\label{eqn:fpe_g22}\\
G_{82}&=\frac{\mu^2\psi_pa^4G_{00}}{\mathcal D_3},\label{eqn:fpe_g82}\\
G_{88}&=\frac{s^2z-\mu^2\psi_pa^4G_{00}}{\mathcal D_3}.\label{eqn:fpe_g88}
\end{align}
\end{subequations}
\endgroup

\subsubsection{Recovering the Limiting Training and Test Errors}
We now combine~\eqref{eqn:fixed_point_system} with the spectral representations of test and train errors. 

\textbf{Step 1: Compute $\Gamma_{\psi_r}$}\\
To compute $\Gamma_{\psi_r}$, we need the limiting spectrum of $M$, for which we use the Stieltjes transform inversion formula. Limiting Stieltjes transform of $M$ is given by 
\begin{equation*}
    \textbf{s}_\nu(z) = \lim_{d\to\infty}\textbf{s}_{M}(z) = \frac{1}{z}\left(\left(1-G_{44}(z)\right)\psi_D+\left(1-G_{22}(z)\right)(1-\psi_D)-1\right).
\end{equation*}
Stieltjes inversion gives the upper-tail mass $F_M(x)$, rather than the cumulative mass below $x$:
\begin{equation*}
    F_{M}(x) = \frac{1}{\pi}\lim_{K\to\infty}\lim_{\eta\downarrow0}\Im\int_{x}^K\textbf{s}_{M}(u+i\eta)du.
\end{equation*}
Then, $\Gamma_{\psi_r}$ is obtained by solving
\begin{equation}\label{eq:app_cutoff_equation}
    F_{M}(\Gamma_{\psi_r}) = \psi_r.
\end{equation}

\textbf{Step 2: Use the limiting expressions for $\{E_i\}_{i=1,2,3}$}\\
Let $e_i = \lim_{d\to\infty} E_i$ for $i=1,2,3$. Translating the block-inverse identities into the zero-based indices of~\eqref{eqn:fixed_point_system} gives
\begin{align*}
    e_1(z) &= -\frac{\psi_p}{s^2z}G_{00}(z),\\
    e_2(z) &= -\frac{\psi_D}{a^2\mu}G_{17},\\
    e_3(z) &= \frac{1}{a^4\mu^2}\left(1-G_{22}\right)(1-\psi_D).
\end{align*}
Substituting these limits into the quadratic and cross terms of~\eqref{eq:app_test_expansion}, and using~\eqref{eq:app_stieltjes_trace} for the training error, yields the following expressions with the cutoff from~\eqref{eq:app_cutoff_equation}:
\begin{align*}
     \lim_{d\to\infty}\cE_{\text{test}}(\hat{A}_r)&= h\sigma^2+\left(1-2h\right)+\frac{h}{\pi}\lim_{K\to\infty}\lim_{\eta\downarrow0}\int_{\Gamma_{\psi_r}}^K dx\; x\Im \textbf{m}(x+i\eta),\\
     \lim_{d\to\infty}\cE_{\text{train}}(\hat{A}_r) &= h\sigma^2+\left(1-2h\right)-\frac{h}{\pi}\lim_{K\to\infty}\lim_{\eta\downarrow0}\int_{\Gamma_{\psi_r}}^K dx\; x\Im \textbf{s}_{\nu}(x+i\eta),
\end{align*}
where 
\begin{align*}
    \textbf{m}(z) = \frac{\mu_1^2(\sigma)}{\sigma^2}e_{2}(z)+\left(\frac{h\mu_1^2(\sigma)}{\sigma^2}-2a^4\frac{\mu_1^2(\sigma)}{\sigma^2}\frac{1}{z}\right)e_{3}(z)+{v^2(\sigma)}e_{1}(z).
\end{align*}
The transform $\textbf{m}$ combines the quadratic contribution with twice the cross contribution, recovering the error characterization of Theorem~\ref{thm:limiting_test_error}.

\subsubsection{Derivation of the Relevant Moments}\label{appndx:matrices}
We collect the Gaussian moment calculations used in the spectral reduction. The common tool is Mehler's expansion in Hermite polynomials: for a jointly standard Gaussian pair with law $P^\gamma$, correlations of nonlinear functions are expanded in powers of $\gamma$. In cross moments with a linear second factor, Hermite orthogonality selects the first coefficient. Feature second moments require separate diagonal and off-diagonal calculations.\\
\textbf{(i) We will first consider $\bar{V}_{\phi y}$:} We have
\begin{equation*}
    \bar{V}_{\phi y} = \bE{x,z}{\act{\rat{W}{d}(ax+\sqrt{h}z)}(ax+\sqrt{h}z)^T}\;.
\end{equation*}
Let $w_i$ denote the $i^{\text{th}}$ row of $W$ and let $x(j)$ denote the $j^{th}$ component of $x$. For large $d$, $\frac{\norm{w_i}^2}{d}$ concentrates to $1$. Note that, conditioned on $W$, $\frac{w_i^T(ax+\sqrt{h}z)}{\sqrt{d}}$ is a zero-mean Gaussian random variable with variance $\bE{x,z}{\frac{(w_i^T(ax+\sqrt{h}z))^2}{d}}=\frac{1}{d}w_i^T(a^2\cC+hI) w_i$. By the concentration property of quadratic forms, the variance concentrates to $\sigma^2 = \frac{1}{d}\tr{a^2\cC+hI}$ as $d\to\infty$. Let $\tilde{\cC} = a^2\cC+hI$. Let $\gamma_{ij} = \frac{1}{\sigma\sqrt{\tilde{\cC}_{jj}}}\rat{w_i^T\tilde{\cC}_{\cdot j}}{d}$
\begin{align*}
    [\bar{V}_{\phi y}]_{ij} &= \bE{x,z}{\act{\rat{w_i^T(ax+\sqrt{h}z)}{d}}(ax_j+\sqrt{h}z_j)},\\
    &= \sqrt{\tilde{\cC}_{jj}}\bE{(u,v)\sim P^{\gamma_{ij}}}{\act{\sigma u}v},\\
    &= \sqrt{\tilde{\cC}_{jj}}\sum_{k=0}^{\infty} \frac{(\gamma_{ij})^k}{k!}\bE{u}{\act{\sigma u}\text{He}_k(u)}\bE{v}{v\text{He}_k(v)},\\
    &= \sqrt{\tilde{\cC}_{jj}}\gamma_{ij}\bE{u}{\act{\sigma u}u} ,\\
    &= \frac{\mu_1(\sigma)}{\sigma}\rat{[W\tilde{\cC}]_{ij}}{d} \;.
\end{align*}
We have $\bar{V}_{\phi y} = \frac{\mu_1(\sigma)}{\sigma}\rat{W\tilde{\cC}}{d}$. \\

\textbf{(ii) Next, consider $\bar{V}_{\phi z}$:} \begin{align*}
    [\bar{V}_{\phi z}]_{ij} &= \bE{x,z}{\act{\rat{w_i^T(ax+\sqrt{h}z)}{d}}z_j},\\
    &= \bE{(x,u,v)\sim P^{\rat{w_{ij}}{d}}}{\act{\rat{a w_i^Tx}{d}+\sqrt{h}u}v},\\
    &= \sum_{k=0}^{\infty} \frac{(\rat{w_{ij}}{d})^k}{k!}\bE{x,u}{\act{\rat{a w_i^Tx}{d}+\sqrt{h}u}\text{He}_k(u)}\bE{v}{v\text{He}_k(v)},\\
    &= \rat{w_{ij}}{d}\bE{x,u}{\act{\rat{a w_i^Tx}{d}+\sqrt{h}u}u} ,\\
    &= \rat{w_{ij}}{d} \bE{g,u}{\varrho(a\sigma_0 g+\sqrt{h}u)u} ,\\
    &= \sqrt{h}\frac{\mu_1(\sigma)}{\sigma}\rat{w_{ij}}{d} \;.
\end{align*}
We have $\bar{V}_{\phi z} = \sqrt{h}\frac{\mu_1(\sigma)}{\sigma}\rat{W}{d}$. \\

\textbf{(iii)
Now we will consider $\bar{U}$:}
When considering $\bar{U}$, a similar argument allows to write $(\tilde{u}_0,\tilde{u}_1)=(\frac{1}{\sigma}\frac{w_i^T(ax+\sqrt{h}z)}{\sqrt{d}},\frac{1}{\sigma}\frac{w_j^T(ax+\sqrt{h}z)}{\sqrt{d}})\sim P^{\tilde{\gamma}_{ij}}$ with $\tilde{\gamma}_{ij} = \frac{1}{\sigma^2}\frac{w_i^T\tilde{\cC} w_j}{d}$, leading to 
\begin{align*}
    \bar{U}_{ij} &= \bE{x}{\act{\rat{w_i^Tx}{d}}\act{\rat{w_j^Tx}{d}}}\\
    &= \bE{(\tilde{u}_0,\tilde{u}_1)\sim P^{\tilde{\gamma}_{ij}}}{\varrho(\sigma\tilde{u}_0)\varrho(\sigma\tilde{u}_1)}\\
    &= \sum_{k=0}^{\infty} \frac{\tilde{\gamma}_{ij}^k}{k!} \;\bE{\tilde{u}_0}{\varrho(\sigma \tilde{u}_0)\text{He}_k(\tilde{u}_0)}\bE{\tilde{u}_1}{\varrho(\sigma \tilde{u}_1)\text{He}_k(\tilde{u}_1)}\\
    &= \sum_{k=0}^{\infty} \frac{\tilde{\gamma}_{ij}^k}{k!} \; \bE{\tilde{u}_0}{\varrho(\sigma\tilde{u}_0)\text{He}_k(\tilde{u}_0)}^2 \;.
\end{align*}
Separating diagonal and off-diagonal entries gives:
\begin{equation*}
    \bar{U}_{ij} = \begin{cases}
        \mu_0^2(\sigma)+\frac{\mu_1^2(\sigma)}{\sigma^2}\frac{w_i^T\tilde{\cC} w_j}{d} + O(1/d)\quad &\text{if } i\neq j \;,\\
        \norm{\varrho(\sigma\;\cdot)}^2\quad &\text{if } i=j \;.
    \end{cases}
\end{equation*}
The $\bigO{1/d}$ contribution cannot give rise to a $\bigO{1}$ change to the normalized trace quantities we are interested in. Hence, we neglect it. We have
\begin{equation*}
    \bar{U} = \mu_0^2(\sigma) \bone_p\bone_p^T + \frac{\mu_1^2(\sigma)}{\sigma^2}\rat{W}{d}\tilde{\cC}\rat{W^T}{d} + v^2(\sigma) I_p \;,
\end{equation*}
where $v^2(\kappa) = \norm{\varrho(\kappa\;\cdot)}^2-\mu_0^2(\kappa)-\mu_1^2(\kappa)$.\\

\textbf{(iv)
Now we will consider $V_{\phi z}$:}

Let
\begin{equation*}
    V_{\phi z}^l = \bE{z}{\act{\rat{W}{d}(ax_l+\sqrt{h}z)}z^T}\;.
\end{equation*}
Here $x_l$ is fixed and only the noise is averaged. Applying Mehler's formula as in the population cross-moment calculation gives
\begin{align*}
    [V_{\phi z}^l]_{ij} &= \bE{z}{\act{\rat{w_i^T(ax_l+\sqrt{h}z)}{d}}z_j},\\
    &= \bE{(u,v)\sim P^{\rat{w_{ij}}{d}}}{\act{\rat{a w_i^Tx_l}{d}+\sqrt{h}u}v},\\
    &= \sum_{k=0}^{\infty} \frac{(\rat{w_{ij}}{d})^k}{k!}\bE{u}{\act{\rat{a w_i^Tx_l}{d}+\sqrt{h}u}\text{He}_k(u)}\bE{v}{v\text{He}_k(v)},\\
    &= \rat{w_{ij}}{d}\bE{u}{\act{\rat{a w_i^Tx_l}{d}+\sqrt{h}u}u} + O(1/d),\\
    &= \rat{w_{ij}}{d}\varrho_1\left(\rat{w_i^Tx_l}{d}\right) + O(1/d)\;,
\end{align*}
where $\varrho_1(y) = \bE{u}{\varrho(ay+\sqrt{h}u)u}$. Summing over the $n$ data samples:
\begin{align*}
    [V_{\phi z}]_{ij} &= \frac{1}{n}\sum_{l=1}^{n} V^l_{ij},\\
    &= \rat{w_{ij}}{d} \frac{1}{n}\sum_{l=1}^{n} \varrho_1\left(\rat{ w_i^Tx_l}{d}\right),\\
    &= \rat{w_{ij}}{d} \bE{x}{\varrho_1\left(\rat{w_i^Tx}{d}\right)} + O(1/\sqrt{d}),\\
    &= \rat{w_{ij}}{d} \bE{g}{\varrho_1(\sigma_0g)} + O(1/\sqrt{d}),\\
    &= \rat{w_{ij}}{d} \bE{g,u}{\varrho(a\sigma_0g+\sqrt{h}u)u} + O(1/\sqrt{d}),\\
    &= \sqrt{h}\frac{\mu_1(\sigma)}{\sigma}\rat{w_{ij}}{d} + O(1/\sqrt{d})\;.
\end{align*}
Neglecting $O(1/\sqrt{d})$ terms, we have $V_{\phi z} = \sqrt{h}\frac{\mu_1(\sigma)}{\sigma}\rat{W}{d} = \bar{V}_{\phi z}$.

\textbf{(v)
Now we will consider $V_{\phi y}$:}

Let
\begin{align*}
    V_{\phi y}^l &= \bE{z}{\act{\rat{W}{d}(ax_l+\sqrt{h}z)}(ax_l+\sqrt{h}z)^T}\;,\\
    &= a\bE{z}{\act{\rat{W}{d}(ax_l+\sqrt{h}z)}}x_l^T+\sqrt{h}V_{\phi z}^l\;,\\
    &= a\varrho_0\left(\rat{W}{d}x_l\right)x_l^T+\sqrt{h}V_{\phi z}^l\;.
\end{align*}
Therefore, summing over the $n$ data samples:
\begin{align*}
    V_{\phi y} &= \frac{1}{n}\sum_{l=1}^{n} V^l_{\phi y}\;,\\
    &= a\frac{1}{n}\varrho_0\left(\rat{W}{d}X\right)X^T+\sqrt{h}V_{\phi z}\;.
\end{align*}

\textbf{(vi) Now, let's consider $U$:} Let
\begin{align*}
    U^l &= \bE{z}{\act{\rat{W}{d}(ax_l+\sqrt{h}z)}\act{\rat{W}{d}(ax_l+\sqrt{h}z)}^T}\;.
\end{align*}
For $i\neq j$, expand in the correlation between the two noise projections. The zeroth-order term gives the product of noise-averaged features, while the first-order term involves $\varrho_1$:
\begin{align*}
    U^l_{ij} &= \bE{z}{\act{\rat{w_i^T(ax_l+\sqrt{h}z)}{d}}\act{\rat{w_j^T(ax_l+\sqrt{h}z)}{d}}}\\
    &= \bE{(u,v)\sim P^{\frac{w_i^Tw_j}{d}}}{\act{a\rat{w_i^Tx_l}{d} + \sqrt{h}u}\act{a\rat{w_j^Tx_l}{d} + \sqrt{h}v}}\\
    &= \sum_{k=0}^{\infty} \frac{(\frac{w_i^Tw_j}{d})^k}{k!}\bE{u}{\act{a\rat{w_i^Tx_l}{d} + \sqrt{h}u}\text{He}_k(u)}\bE{v}{\act{a\rat{w_j^Tx_l}{d} + \sqrt{h}v}\text{He}_k(v)}\\
    &= \varrho_0\left(\rat{w_i^Tx_l}{d}\right)\varrho_0\left(\rat{w_j^Tx_l}{d}\right)+\frac{w_i^Tw_j}{d}\varrho_1\left(\rat{w_i^Tx_l}{d}\right)\varrho_1\left(\rat{w_j^Tx_l}{d}\right)+O(1/d)\;,
\end{align*}
where $\varrho_0(y) = \bE{u}{\varrho(ay+\sqrt{h}u)}$ and $\varrho_1(y) = \bE{u}{\varrho(ay+\sqrt{h}u)u}$. Summing over the $n$ data samples:
\begin{align*}
    U_{ij} &= \frac{1}{n}\sum_{l=1}^{n} U^l_{ij}\\
    &= \frac{1}{n}\sum_{l=1}^{n}\varrho_0\left(\rat{w_i^Tx_l}{d}\right)\varrho_0\left(\rat{w_j^Tx_l}{d}\right)+\frac{w_i^Tw_j}{d}\frac{1}{n}\sum_{l=1}^{n}\varrho_1\left(\rat{w_i^Tx_l}{d}\right)\varrho_1\left(\rat{w_j^Tx_l}{d}\right) + O(1/d)\\
    &= \frac{1}{n}\sum_{l=1}^{n}\varrho_0\left(\rat{w_i^Tx_l}{d}\right)\varrho_0\left(\rat{w_j^Tx_l}{d}\right)+\frac{w_i^Tw_j}{d}\bE{x}{\varrho_1\left(\rat{w_i^Tx}{d}\right)\varrho_1\left(\rat{w_j^Tx}{d}\right)} + O(1/d)\\
    &\stackrel{(a)}{=} \frac{1}{n}\sum_{l=1}^{n}\varrho_0\left(\rat{w_i^Tx_l}{d}\right)\varrho_0\left(\rat{w_j^Tx_l}{d}\right)+\frac{w_i^Tw_j}{d}\bE{g}{\varrho_1(\sigma_0g)}^2 + O(1/d)\\
    &= \frac{1}{n}\sum_{l=1}^{n}\varrho_0\left(\rat{w_i^Tx_l}{d}\right)\varrho_0\left(\rat{w_j^Tx_l}{d}\right)+h\frac{\mu_1^2(\sigma)}{\sigma^2}\frac{w_i^Tw_j}{d} + O(1/d)\;.\\
\end{align*}
In step $(a)$, we apply Mehler's formula to $\bE{x}{\varrho_1\left(\rat{w_i^Tx}{d}\right)\varrho_1\left(\rat{w_j^Tx}{d}\right)}$ and neglected terms that will lead to $O(1/d)$ terms.
The diagonal requires a separate calculation because a feature is perfectly correlated with itself. For $i=j$, we have:
\begin{align*}
    U^l_{ii} &= \bE{z}{\left(\act{\rat{w_i^T(ax_l+\sqrt{h}z)}{d}}\right)^2} \;,\\
\end{align*}
and
\begin{align*}
    U_{ii} &= \frac{1}{n}\sum_{l=1}^n\bE{z}{\left(\act{\rat{w_i^T(ax_l+\sqrt{h}z)}{d}}\right)^2}\\
    &= \bE{z,x}{\left(\act{\rat{w_i^T(ax+\sqrt{h}z)}{d}}\right)^2}+O(1/\sqrt{d})\\
    &= \norm{\act{\sigma\;\cdot}}^2+O(1/\sqrt{d}) \;.
\end{align*}
The $\bigO{1/\sqrt{d}}$ term in the above equation can be neglected, since there are only $\bigO{d}$ terms on the diagonal. Let $X=[x_1,x_2,\cdots,x_n]\in\R^{d\times n}$. We can write $U$ as:
\begin{equation*}
    U = \rat{\varrho_0\left(\rat{W}{d}X\right)}{n}\rat{\varrho_0\left(\rat{W}{d}X\right)^T}{n}+h\frac{\mu_1^2(\sigma)}{\sigma^2}\rat{W}{d}\rat{W^T}{d}+s^2I_p \;,
\end{equation*}
where
\begin{align*}
s^2 &= \norm{\varrho(\sigma\;\cdot)}^2-\bE{g}{\varrho_0(\sigma_0 g)^2}-\frac{h}{\sigma^2}\mu_1^2(\sigma)\\
&= \norm{\varrho(\sigma\;\cdot)}^2-\bE{g}{\bE{u}{\varrho(a\sigma_0 g+\sqrt{h}u)}^2}-\frac{h}{\sigma^2}\mu_1^2(\sigma)\\
&= \norm{\varrho(\sigma\;\cdot)}^2-c(a^2\sigma_0^2/\sigma^2,\sigma)-\frac{h}{\sigma^2}\mu_1^2(\sigma) \;,
\end{align*}
with $c(\gamma,\kappa) = \bE{u,v\sim P^\gamma}{\varrho(\kappa u)\varrho(\kappa v)}$. Let $H:= \varrho_0\left(\rat{W}{d}X\right)$.
We have found:
\begin{equation*}
    U = \rat{H}{n}\rat{H^T}{n}+h\frac{\mu_1^2(\sigma)}{\sigma^2}\rat{W}{d}\rat{W^T}{d}+s^2I_p \;,
\end{equation*}

\section{Additional Numerical Results}
\subsection{Test and train errors for different diffusion time $t$}\label{sec:add_res_time}

Figures~\ref{fig:sim_model_tanh_t0p1} and~\ref{fig:theory_model_tanh_t0p1} extend the learning curves to $t=0.1$. The main rank-dependent trends persist at this larger diffusion time: training error decreases with rank, while test error depends on both sample size and data dimension. In the displayed settings with $n>D$, test error reaches a minimum near $r=d-D$ and then increases, whereas for $n<D$ it increases throughout the displayed rank range. The random-feature predictions agree closely with the simulated training and test errors and reproduce these qualitative trends in the trained networks, supporting the relevance of the geometric reference rank beyond $t=0.01$.

\begin{figure}[h]
    \centering
    \includegraphics[width=\linewidth]{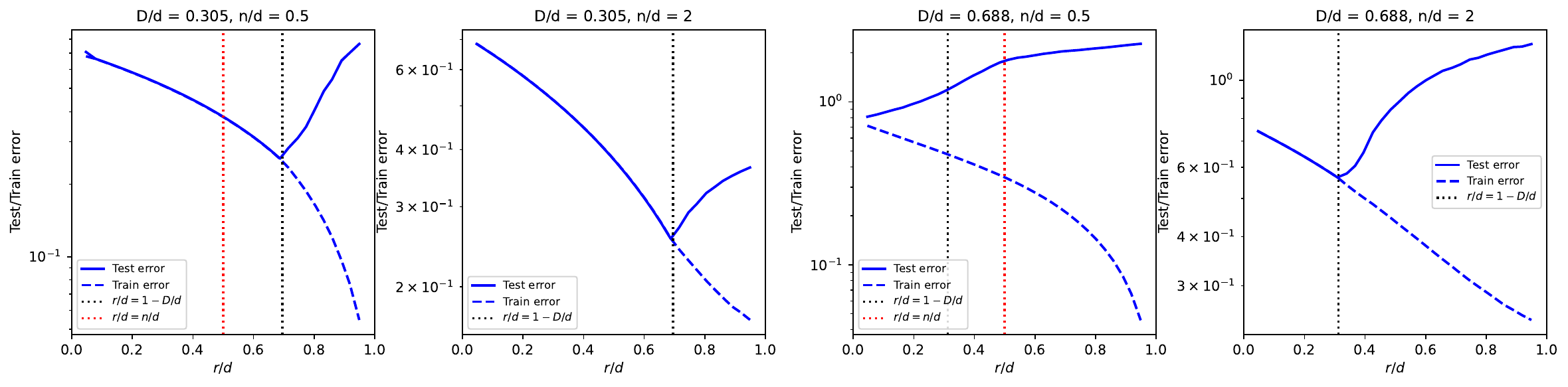}
    \caption{Simulation results for our numerical model, with $\tanh$ activation at $t=0.1$, $p/d=2$.}
    \label{fig:sim_model_tanh_t0p1}
\end{figure}

\begin{figure*}[h]
    \centering
    \includegraphics[width=\linewidth]{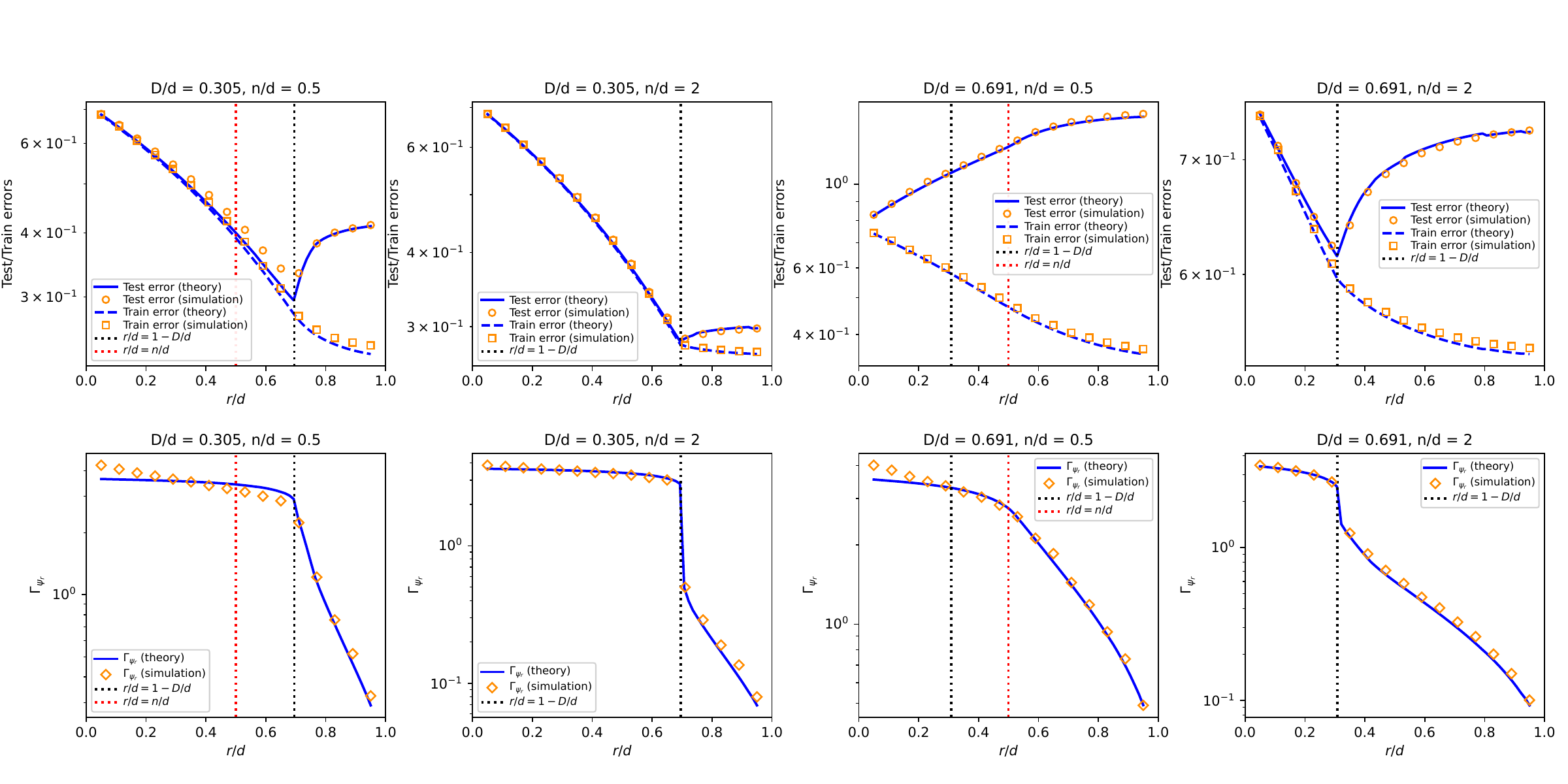}
    \caption{Analytical and simulation results for our theoretical model with $\tanh$ activation at $t=0.1$, $\psi_p = 2.0$.}
    \label{fig:theory_model_tanh_t0p1}
\end{figure*}

\section{Simulation Conditions}\label{app:simulation_conditions}
We describe the simulations of the trainable score network in~\eqref{eqn:score_numerical}, reported in Figures~\ref{fig:sim_model_tanh_t0p01} and~\ref{fig:sim_model_tanh_t0p1}.

\paragraph{Data and parameter sweep.}
We set the ambient dimension to $d=256$ and the hidden-layer width to $p=512$. For each dataset seed, an orthonormal basis $\Theta\in\mathbb{R}^{d\times D}$ is obtained by a QR decomposition of a standard Gaussian matrix. Independent training and test samples are drawn as $x=\sqrt{D}\,\Theta g/\|g\|$, with $g\sim\mathcal{N}(0,I_D)$. We use $D\in\{78,176\}$ and $n\in\{128,512\}$, corresponding to $D/d\approx0.305,0.688$ and $n/d=0.5,2$. Each test set contains $512$ independent clean samples from the same distribution. 

\paragraph{Initialization and training.}
The activation is $\tanh$, and the factors $W_t,A_t,B_t$ are initialized with independent standard Gaussian entries, with the normalization factors specified in~\eqref{eqn:score_numerical}. Biases are initialized to zero.
All three weight matrices and the biases are trained for $30{,}000$ Adam \citep{kingma_adam_2017} updates with a constant learning rate of $10^{-2}$, batch size $512$, no weight decay, and global gradient-norm clipping at $10$. Each update samples clean training points uniformly with replacement and draws fresh standard Gaussian noise, forming $y=a_tx+\sqrt{h_t}z$. The optimized loss is the mean of $\|\sqrt{h_t}s_t(y)+z\|^2/d$ over the minibatch. Within each seed and diffusion time, models at different ranks share initial weight entries, minibatch indices, and noise realizations, but maintain separate trainable parameters and optimizer states. The rank constraint is implemented by retaining the first $r$ terms of the factorized output map.

\paragraph{Evaluation and averaging.}
Training and test errors evaluate the same normalized DSM objective on all clean training points and on the independent test set, respectively. Each evaluation averages over $16$ fresh noise realizations per clean sample, processed in batches of $256$; evaluation noise is separate from training noise and shared across ranks at a fixed diffusion time. 
Plotted curves use the final evaluation at $30{,}000$ updates. We average over three paired dataset and initialization seeds, $(0,0)$, $(1,1)$, and $(2,2)$.
The implementation uses JAX, Flax, and Optax with double-precision arrays.

\end{document}